\definecolor{subblue}{rgb}{0.29,0.56,0.89}
\definecolor{headers}{RGB}{12, 0, 144}
\newtheorem{theorem}{Theorem}[section]
\newtheorem{lemma}[theorem]{Lemma}
\newcommand{\ent}{\par\vspace{5mm}}
\title{{\LARGE  Evaluating DisCoCirc in Translation Tasks \& its Limitations:\\ A Comparative Study Between Bengali \& English}\\ \vspace{3mm} {\scriptsize This paper is adapted from a mini-project submitted in partial fulfillment of the requirements  for the Master of Science in \\ \vspace{-4mm} Mathematics and Foundations of Computer Science,  University of Oxford, Hilary 2024}\vspace{0mm}}
\author{{\normalsize Nazmoon Falgunee Moon}\\ \small\texttt{nazmoonmoon6@gmail.com}\vspace{0mm}}
\date{\normalsize \today}
\begin{document}

\begin{titlepage}
	\maketitle
	\section*{\color{black} \centering \normalsize Abstract}
	
	\begin{adjustwidth}{30pt}{30pt}
		\small
		In \cite{main}, the authors present the DisCoCirc (Distributed Compositional Circuits) formalism for the English language, a grammar-based framework derived from production rules  that incorporates circuit-like representations in order to give a precise categorical theoretical structure to the language. In this paper, we extend this approach to develop a similar framework for Bengali and apply it to translation tasks between English and Bengali. A central focus of our work lies in reassessing the effectiveness of DisCoCirc in reducing language bureaucracy. Unlike the result suggested in \cite{urdu}, our findings indicate that although it works well for a large part of the language, it still faces limitations due to the structural variation of the two languages. We discuss the possible methods that might handle these shortcomings and show that, in practice, DisCoCirc still struggles even with relatively simple sentences. This divergence from prior claims not only highlights the framework’s constraints in translation but also suggest scope for future improvement. Apart from our primary focus on English–Bengali translation, we also take a short detour to examine English conjunctions, following \cite{and}, showing a connection between conjunctions and Boolean logic.       
\end{adjustwidth}
	 
	\tableofcontents
	\thispagestyle{empty}
\end{titlepage}

\section{Introduction}
The DisCoCirc (Distributed Compositional Circuits) framework  was first introduced in \cite{main} to address the limitations of the previous DisCoCat (Distributional Compositional Categorical) model \cite{Discat}. This framework particularly handles the rigidity and mathematical inconsistencies existent in the earlier model by allowing meanings to evolve across texts and accommodating for a well-defined meaning space by the use of circuit-like diagrams for texts that are both parallely and sequentially composable.
\ent
In the DisCoCirc formalism, texts are represented as circuits, depicting noun phrases as wires and other linguistic units as different processes that modifies them, allowing a precise categorical theocratic structure to the English language.  It is suggested in \cite{main}, that the DisCoCirc formalism might effectively reduce linguistic bureaucracy. Building on this claim, further works are done in \cite{urdu}, where the authors  show that there exists a surjective mapping from the set of  all Urdu text  to the set of all Urdu text circuits. Moreover, for these restricted fragments, the hybrid grammars for English and Urdu are isomorphic, indicating a structural equivalence between the two languages within this framework.
\ent
Our work runs parallel to the studies in \cite{main,urdu}. Here, we start by developing a hybrid grammar for Bengali, construct the corresponding planar tree diagrams, text diagrams, and text circuits,  following the  DisCoCirc formalism of \cite{main}, and examine its structural similarity with English. In addition, we discuss if DisCoCirc can be used as a translation tool between the two languages, revisiting DisCoCirc's capacity in reducing linguistic bureaucracy.
\ent
In addition to our main discussion on translation, we briefly explore English conjunctions, drawing on \cite{and}. While this part of our discussion will not be related to our primary results, it provides a good insight into how DisCoCirc can be related to Boolean logic and how the framework can simplify complex grammatical structures.
\ent
Lastly, we note that throughout this work all diagrams should be read from top to bottom and from left to right. Moreover, for simplicity, we have mostly restricted our discussion to sentences in the simple present tense only.

\section{Hybrid Grammar}
The hybrid grammar system of \cite{main} follows a string rewrite system to generate text sequentially through a finite collection of production rules, where each rule replaces a sequence of symbols with another. Symbols stand either for grammatical units or words. Once a symbol corresponds to a word, no further rewriting applies. Therefore, they are taken as terminal, and we generally underline them. 
\ent
As an example, we can take the production rules for a simple sentence with a transitive verb in English. We can generate valid sentences by step by step replacing the non-terminal symbols in the rules with appropriate phrases or words. That is, if we take:
\begin{align*}
&\texttt{S} \longmapsto \texttt{NP}_1 \cdot \texttt{TVP} \cdot \texttt{NP}_2 
  && \text{\scriptsize (Symbols: $\cdot$= string concatenation; $\texttt{S}$=Start; \texttt{NP} = Noun Phrase; $\texttt{TVP}$ = Transitive Verb Phrase)},\\
&\texttt{NP}_1 \longmapsto \underline{\text{Millie}} 
  && \text{\scriptsize (terminal symbol)},\\
&\texttt{TVP} \longmapsto \underline{\text{eats}} 
  && \text{\scriptsize (terminal symbol)},\\
&\texttt{NP}_2 \longmapsto \underline{\text{rice}} 
  && \text{\scriptsize (terminal symbol)}.
\end{align*}
 we can generate the sentence ``Millie eats rice'' in the following way:
\begin{align*}
    \texttt{S} &\longmapsto \texttt{NP}_1 \cdot \texttt{TVP} \cdot \texttt{NP}_2 \\
    &\longmapsto \text{\underline{Millie}} \cdot \texttt{TVP} \cdot \texttt{NP}_2 \\
    &\longmapsto \text{\underline{Millie}} \cdot \text{\underline{eats}} \cdot \texttt{NP}_2 \\
    &\longmapsto \text{\underline{Millie}} \cdot \text{\underline{eats}} \cdot \text{\underline{rice}}.
\end{align*}

Given a set of production rules, a tree diagram gives a nice visual representation of  the hierarchical structure of a sentence. It starts with the start symbol, $\texttt{S}$, at the top and branches into non-terminal symbols, such as noun phrases (\texttt{NP}), transitive verb phrases (\texttt{TVP}), etc., and further expands to terminal words, like $\underline{\text{Millie}}$, $\underline{\text{rice}}$, etc. For example, using the above production rules, we can generate the following tree diagram:
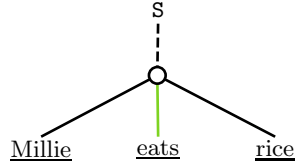
\begin{figure}[H]
	\centering
	\tikzset{every picture/.style={line width=1pt}} %set default line width to 0.75pt        

\begin{tikzpicture}[x=1pt,y=1pt,yscale=-1,xscale=1]
%uncomment if require: \path (0,589); %set diagram left start at 0, and has height of 589

%Straight Lines [id:da7122568872695647] 
\draw  [dash pattern={on 3.75pt off 1.5pt}]  (367.68,348.17) -- (368,327.53) ;
%Straight Lines [id:da10395288656587098] 
\draw    (324.22,371.31) -- (367.68,348.17) ;
%Straight Lines [id:da8258723614978236] 
\draw [color={rgb, 255:red, 126; green, 211; blue, 33 }  ,draw opacity=1 ]   (368,371.31) -- (367.68,348.17) ;
%Straight Lines [id:da36469311484703626] 
\draw [color={rgb, 255:red, 0; green, 0; blue, 0 }  ,draw opacity=1 ][fill={rgb, 255:red, 65; green, 117; blue, 5 }  ,fill opacity=1 ]   (411.77,371.31) -- (367.68,348.17) ;
%Shape: Ellipse [id:dp6253696895012447] 
\draw  [fill={rgb, 255:red, 255; green, 255; blue, 255 }  ,fill opacity=1 ] (364.56,348.17) .. controls (364.56,346.44) and (365.96,345.04) .. (367.68,345.04) .. controls (369.41,345.04) and (370.81,346.44) .. (370.81,348.17) .. controls (370.81,349.9) and (369.41,351.3) .. (367.68,351.3) .. controls (365.96,351.3) and (364.56,349.9) .. (364.56,348.17) -- cycle ;

% Text Node
\draw (411.77,371.31) node [anchor=north] [inner sep=0.75pt]  [font=\footnotesize] [align=left] {\underline{rice}};
% Text Node
\draw (324.22,371.31) node [anchor=north] [inner sep=0.75pt]  [font=\footnotesize] [align=left] {\underline{Millie}};
% Text Node
\draw (368,371.31) node [anchor=north] [inner sep=0.75pt]  [font=\footnotesize] [align=left] {\underline{eats}};
% Text Node
\draw (368,327.53) node [anchor=south] [inner sep=0.75pt]  [font=\footnotesize] [align=left] {\texttt{S}};

\end{tikzpicture}
	\caption{Tree diagram for a simple sentence (English)}
	\label{fig1}
\end{figure}
Due to structural differences in each language, even direct translated sentences can have different production rules. This results in the generation of different tree diagrams even for sentences with the same meaning, making tree diagrams  great visual indicators of syntactic variation. For example, in Bengali, the production rules for a simple sentence with a transitive verb start with $\texttt{S} \longmapsto \texttt{NP}_1 \cdot \texttt{NP}_2 \cdot \texttt{TVP}$.
The direct translation of ``Millie eats rice.'' is \includegraphics[scale=0.9,trim=0 4.5pt 0 0]{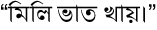} which transliterates to ``Mili $\underset{\color{subblue}rice}{\text{bhāta}}$ $\underset{\color{subblue}eats}{\text{khāẏa.''}}$ So, using the production rules for a simple sentence with a transitive verb in Bengali, we can generate the sentence \includegraphics[scale=0.9,trim=0 4.5pt 0 0]{bengali_assets/Asset_1.eps} as the following: 
\begin{center}
    \includegraphics[scale=0.9]{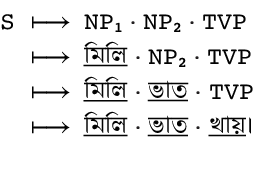}
\end{center}
\vspace{-6mm}
Likewise, the corresponding tree diagram for Bengali will differ from that of English:
\begin{figure}[H]
	\centering
	\includegraphics[scale=0.9]{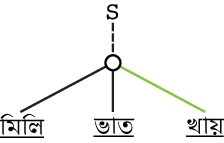}
    \vspace{5mm}
	\caption{Tree diagram for a simple sentence (Bengali)}
	\label{fig1'}
\end{figure}
We now proceed to present the rules and the corresponding tree diagrams for sentences in the Bengali language. From now on, while developing the DisCoCirc formalism for Bengali, we will sometimes include the English structures as well to provide a side by side comparison if needed. Moreover, to improve readability for a wider audience,  we will use transliterated Bengali sentences rather than the original script.

\subsection{Simple Sentence Structure}
 \begin{table}[H]
	\centering
	\include{diagrams/big_table}
	\caption{Rules and planar tree fragments for simple sentence shown side by side for Bengali and English}
	\label{table}
\end{table}
In Tab. \ref{table}, \texttt{NP} denotes a noun phrase, \texttt{TVP} a transitive verb phrase, \texttt{IVP} an intransitive verb phrase, \texttt{ADJ} an adjective phrase, \texttt{ADV} an adverb phrase, and \texttt{ADP} an adposition. We can see that there are  significant differences in the grammatical structures of the two languages, even for simple sentences.
\ent 
\phantomsection \label{is issue}
We begin by examining the second rule for adjectives in Bengali. In English, the 2nd rule for adjectives is for sentences like ``Millie is pretty.'' Here `is (to be)' is  a copula that links the subject of the sentence to the complement, which in our case is an adjective.  Upon translation, this becomes ``Millie $\underset{\color{subblue}{pretty}}{\text{sundarī''}}$ in Bengali, where `is' does not get translated. In Bengali  `is' has a direct translation, which is `haẏa', but in present-tense descriptive sentences like “Millie is pretty,” it is conventional to drop the copula. Translating the copula `is' explicitly  sounds unnatural and often suggests a different meaning. For example, ``Millie $\underset{\color{subblue}{is}}{\text{haẏa}}$ $\underset{\color{subblue}{pretty}}{\text{sundarī''}}$ suggests that it should be interpreted more as ``Millie becomes pretty'' instead. 
\ent
\phantomsection\label{gender issue}
Moreover, Bengali has the concept of  gendered adjectives. They agree with the genders of the nouns they modify. For example, in the sentence ``Millie $\underset{\color{subblue}{pretty}}{\text{sundarī,''}}$ the adjective $\underset{\color{subblue}{pretty}}{\text{`sundarī'}}$ has the suffix `-ī' ($\underset{\color{subblue}{pretty}}{\text{`sundara'}}$ plus `-ī'), which signifies that the noun `Millie' is female. Here, `-ī'  not   a separate syntactic element, rather a part of the adjective’s morphology, a grammatical inflectional ending. It ensures that the adjective aligns with the noun's gender. This is different from English, where adjectives are often invariable and do not reflect the gender of the noun. 
\ent
In English, adverbs can appear both before and after transitive and intransitive verb phrases. For example, in case of a transitive verb, we find sentences like ``Millie rarely sleeps'' and ``Millie runs quickly.'' With an intransitive verb, we see instances like ``Millie often reads the book'' and ``Millie reads the book carefully.'' In contrast, in Bengali, the most natural placement of the adverb is before the verb phrases, whether transitive or intransitive: like ``Millie  $\underset{\color{subblue}{fast}}{\text{druta}}$ $\underset{\color{subblue}{runs}}{\text{ dauṛāẏa''}}$ and ``Millie $\underset{\color{subblue}{often}}{\text{prāẏa'i}}$ chocolate
$\underset{\color{subblue}{eats}}{\text{khāẏa.''}}$ While placing the adverb after the verb is still grammatically correct and conveys the same meaning, it is not often considered natural.\\

Structural differences are also evident in the use of adpositions in the two languages. We will return to this in sec.\ref{case}, pg.\pageref{case}.

\subsection{Compound Sentence Structure}
In Bengali, compound sentences can be formed in several ways: by linking two simple sentences through a pronominal element, by using a verb with a sentential complement, or by connecting clauses with a conjunction. In this section, we present the corresponding production rules and their tree fragments for Bengali. We will see that, though not always, they are somewhat similar to the ones described in \cite{main} for English.
\paragraph{Type I: Pronominal Links:}
 If the same noun occurs as a terminal symbol of two neighboring simple sentences, we can identify them in the respective planar trees of the sentences and replace one of them with an appropriate pronoun, thus merging the sentences. There are four primary ways to achieve this:
 \ent
\textbf{Connection via Personal Pronouns:} In the simplest case, subject-subject links and object-object links between simple sentences can be done by using personal pronouns. In such cases, we identify the occurrence of the same subject or equivalently object in the nearby simple sentence and  replace the noun of the second one with a suitable personal pronoun. For example:
\begin{figure}[H]
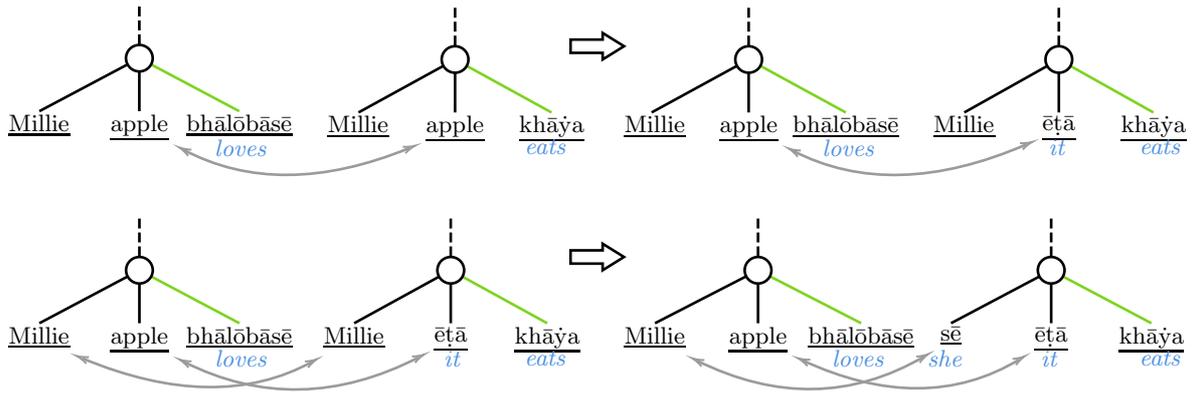

	\centering
	\include{diagrams/polynomiallinks}
	\caption{Examples of pronominal links using personal pronouns}
	\label{fig2}
\end{figure}
\textbf{Connection via Relative Pronouns:}
Connecting two simple sentences using relative pronouns often produces planar tree fragments that are structurally more complex. These types of connection can help link a subject to another subject, an object to a subject, or an object to another object across the sentences.
\ent
To start with, in the  simplest scenario, if we have two simple sentences side by side where the object of the first sentence matches the subject of the second sentence, we can simply replace that subject with $\underset{\color{subblue}{who}}{\text{`yē'}}$ or $\underset{\color{subblue}{which}}{\text{`yā'}}$.  For example:
\begin{figure}[H]
	\centering
	\tikzset{every picture/.style={line width=1pt}} %set default line width to 0.75pt        

\begin{tikzpicture}[x=1pt,y=1pt,yscale=-1,xscale=1]
%uncomment if require: \path (0,1022); %set diagram left start at 0, and has height of 1022

%Right Arrow [id:dp26844390447215394] 
\draw   (780,422.5) -- (792,422.5) -- (792,420) -- (800,425) -- (792,430) -- (792,427.5) -- (780,427.5) -- cycle ;
%Straight Lines [id:da601661018583245] 
\draw  [dash pattern={on 3.75pt off 1.5pt}]  (632,430) -- (632,410) ;
%Straight Lines [id:da3673212035924176] 
\draw    (592,450) -- (632,430) ;
%Straight Lines [id:da1828973194870841] 
\draw [color={rgb, 255:red, 0; green, 0; blue, 0 }  ,draw opacity=1 ]   (632,450) -- (632,430) ;
%Straight Lines [id:da10749094717566465] 
\draw [color={rgb, 255:red, 126; green, 211; blue, 33 }  ,draw opacity=1 ]   (672,450) -- (632,430) ;
%Shape: Circle [id:dp3783238332527149] 
\draw  [fill={rgb, 255:red, 255; green, 255; blue, 255 }  ,fill opacity=1 ] (627,430) .. controls (627,427.24) and (629.24,425) .. (632,425) .. controls (634.76,425) and (637,427.24) .. (637,430) .. controls (637,432.76) and (634.76,435) .. (632,435) .. controls (629.24,435) and (627,432.76) .. (627,430) -- cycle ;
%Straight Lines [id:da08778875188043189] 
\draw  [dash pattern={on 3.75pt off 1.5pt}]  (736.12,430) -- (736.12,410) ;
%Straight Lines [id:da31990339200556694] 
\draw    (711.95,450) -- (736.12,430) ;
%Straight Lines [id:da04229382300313367] 
\draw [color={rgb, 255:red, 208; green, 2; blue, 27 }  ,draw opacity=1 ][fill={rgb, 255:red, 208; green, 2; blue, 27 }  ,fill opacity=1 ] [dash pattern={on 3.75pt off 1.5pt}]  (760.3,450) -- (736.12,430) ;
%Shape: Circle [id:dp324187879354464] 
\draw  [fill={rgb, 255:red, 255; green, 255; blue, 255 }  ,fill opacity=1 ] (731.12,430) .. controls (731.12,427.24) and (733.36,425) .. (736.12,425) .. controls (738.89,425) and (741.12,427.24) .. (741.12,430) .. controls (741.12,432.76) and (738.89,435) .. (736.12,435) .. controls (733.36,435) and (731.12,432.76) .. (731.12,430) -- cycle ;
%Straight Lines [id:da7723699407638873] 
\draw  [dash pattern={on 3.75pt off 1.5pt}]  (850,430) -- (850,410) ;
%Straight Lines [id:da24705704249393368] 
\draw    (810,450) -- (850,430) ;
%Straight Lines [id:da7332289522486359] 
\draw [color={rgb, 255:red, 0; green, 0; blue, 0 }  ,draw opacity=1 ]   (850,450) -- (850,430) ;
%Straight Lines [id:da5160067084116499] 
\draw [color={rgb, 255:red, 126; green, 211; blue, 33 }  ,draw opacity=1 ]   (890,450) -- (850,430) ;
%Shape: Circle [id:dp4076490811827387] 
\draw  [fill={rgb, 255:red, 255; green, 255; blue, 255 }  ,fill opacity=1 ] (845,430) .. controls (845,427.24) and (847.24,425) .. (850,425) .. controls (852.76,425) and (855,427.24) .. (855,430) .. controls (855,432.76) and (852.76,435) .. (850,435) .. controls (847.24,435) and (845,432.76) .. (845,430) -- cycle ;
%Straight Lines [id:da1751726570562303] 
\draw  [dash pattern={on 3.75pt off 1.5pt}]  (951.82,430) -- (951.82,410) ;
%Straight Lines [id:da7987406891657717] 
\draw    (929.13,450) -- (951.82,430) ;
%Straight Lines [id:da13249644697021834] 
\draw [color={rgb, 255:red, 208; green, 2; blue, 27 }  ,draw opacity=1 ][fill={rgb, 255:red, 208; green, 2; blue, 27 }  ,fill opacity=1 ] [dash pattern={on 3.75pt off 1.5pt}]  (974.5,450) -- (951.82,430) ;
%Shape: Circle [id:dp6659572459417892] 
\draw  [fill={rgb, 255:red, 255; green, 255; blue, 255 }  ,fill opacity=1 ] (946.82,430) .. controls (946.82,427.24) and (949.05,425) .. (951.82,425) .. controls (954.58,425) and (956.82,427.24) .. (956.82,430) .. controls (956.82,432.76) and (954.58,435) .. (951.82,435) .. controls (949.05,435) and (946.82,432.76) .. (946.82,430) -- cycle ;

% Text Node
\draw (672,450) node [anchor=north] [inner sep=0.75pt]  [font=\normalsize] [align=left] {{\footnotesize \underline{bhālōbāsē}}};
% Text Node
\draw (632,450) node [anchor=north] [inner sep=0.75pt]  [font=\footnotesize] [align=left] {\underline{Billiekē}};
% Text Node
\draw (711.95,450) node [anchor=north] [inner sep=0.75pt]  [font=\footnotesize] [align=left] {\underline{Billie}};
% Text Node
\draw (592,450) node [anchor=north] [inner sep=0.75pt]  [font=\footnotesize] [align=left] {\underline{Millie}};
% Text Node
\draw (762,459) node [anchor=north] [inner sep=0.75pt]  [font=\footnotesize,color={rgb, 255:red, 74; green, 144; blue, 226 }  ,opacity=1 ] [align=left] {\textit{handsome}};
% Text Node
\draw (631.5,459) node [anchor=north] [inner sep=0.75pt]  [font=\footnotesize,color={rgb, 255:red, 74; green, 144; blue, 226 }  ,opacity=1 ] [align=left] {\textit{\textcolor[rgb]{0.29,0.56,0.89}{Billie}}};
% Text Node
\draw (760.3,450) node [anchor=north] [inner sep=0.75pt]  [font=\footnotesize] [align=left] {\underline{Sudarśana}};
% Text Node
\draw (672.5,459) node [anchor=north] [inner sep=0.75pt]  [font=\footnotesize,color={rgb, 255:red, 74; green, 144; blue, 226 }  ,opacity=1 ] [align=left] {\textit{\textcolor[rgb]{0.29,0.56,0.89}{loves}}};
% Text Node
\draw (890,450) node [anchor=north] [inner sep=0.75pt]  [font=\normalsize] [align=left] {{\footnotesize \underline{bhālōbāsē}}};
% Text Node
\draw (850,450) node [anchor=north] [inner sep=0.75pt]  [font=\footnotesize] [align=left] {\underline{Billiekē}};
% Text Node
\draw (810,450) node [anchor=north] [inner sep=0.75pt]  [font=\footnotesize] [align=left] {\underline{Millie}};
% Text Node
\draw (975.33,459) node [anchor=north] [inner sep=0.75pt]  [font=\footnotesize,color={rgb, 255:red, 74; green, 144; blue, 226 }  ,opacity=1 ] [align=left] {\textit{handsome}};
% Text Node
\draw (852,459) node [anchor=north] [inner sep=0.75pt]  [font=\footnotesize,color={rgb, 255:red, 74; green, 144; blue, 226 }  ,opacity=1 ] [align=left] {\textit{\textcolor[rgb]{0.29,0.56,0.89}{Billie}}};
% Text Node
\draw (974.5,450) node [anchor=north] [inner sep=0.75pt]  [font=\footnotesize] [align=left] {\underline{Sudarśana}};
% Text Node
\draw (891,459) node [anchor=north] [inner sep=0.75pt]  [font=\footnotesize,color={rgb, 255:red, 74; green, 144; blue, 226 }  ,opacity=1 ] [align=left] {\textit{\textcolor[rgb]{0.29,0.56,0.89}{loves}}};
% Text Node
\draw (928,451) node [anchor=north] [inner sep=0.75pt]  [font=\footnotesize] [align=left] {\underline{yē}};
% Text Node
\draw (928,460) node [anchor=north] [inner sep=0.75pt]  [font=\footnotesize,color={rgb, 255:red, 74; green, 144; blue, 226 }  ,opacity=1 ] [align=left] {\textit{\textcolor[rgb]{0.29,0.56,0.89}{who}}};
% Connection
\draw [color={rgb, 255:red, 155; green, 155; blue, 155 }  ,draw opacity=1 ]   (649.89,466.16) .. controls (665.41,476.13) and (681.1,475.81) .. (696.98,465.21) ;
\draw [shift={(698.45,464.19)}, rotate = 144.72] [color={rgb, 255:red, 155; green, 155; blue, 155 }  ,draw opacity=1 ][line width=0.75]    (4.37,-1.32) .. controls (2.78,-0.56) and (1.32,-0.12) .. (0,0) .. controls (1.32,0.12) and (2.78,0.56) .. (4.37,1.32)   ;
\draw [shift={(648.15,465)}, rotate = 34.68] [color={rgb, 255:red, 155; green, 155; blue, 155 }  ,draw opacity=1 ][line width=0.75]    (4.37,-1.32) .. controls (2.78,-0.56) and (1.32,-0.12) .. (0,0) .. controls (1.32,0.12) and (2.78,0.56) .. (4.37,1.32)   ;
% Connection
\draw [color={rgb, 255:red, 155; green, 155; blue, 155 }  ,draw opacity=1 ]   (869.17,466.16) .. controls (886.43,476.11) and (902.73,475.51) .. (918.08,464.37) ;
\draw [shift={(919.5,463.31)}, rotate = 146.95] [color={rgb, 255:red, 155; green, 155; blue, 155 }  ,draw opacity=1 ][line width=0.75]    (4.37,-1.32) .. controls (2.78,-0.56) and (1.32,-0.12) .. (0,0) .. controls (1.32,0.12) and (2.78,0.56) .. (4.37,1.32)   ;
\draw [shift={(867.23,465)}, rotate = 27.5] [color={rgb, 255:red, 155; green, 155; blue, 155 }  ,draw opacity=1 ][line width=0.75]    (4.37,-1.32) .. controls (2.78,-0.56) and (1.32,-0.12) .. (0,0) .. controls (1.32,0.12) and (2.78,0.56) .. (4.37,1.32)   ;

\end{tikzpicture}
	\caption{Example 1 of connection via relative pronouns}
	\label{ke}
\end{figure}
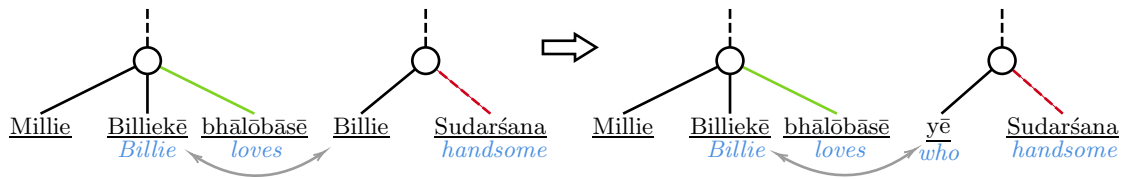
\phantomsection
\label{par:diff}
If both the first and second sentences have the same subject, we can use the pronouns $\underset{\color{subblue}{who}}{\text{`yē}}$\dots $\underset{\color{subblue}{she/he}}{\text{sē'}}$   to connect the sentences. For instance:
\begin{figure}[H]
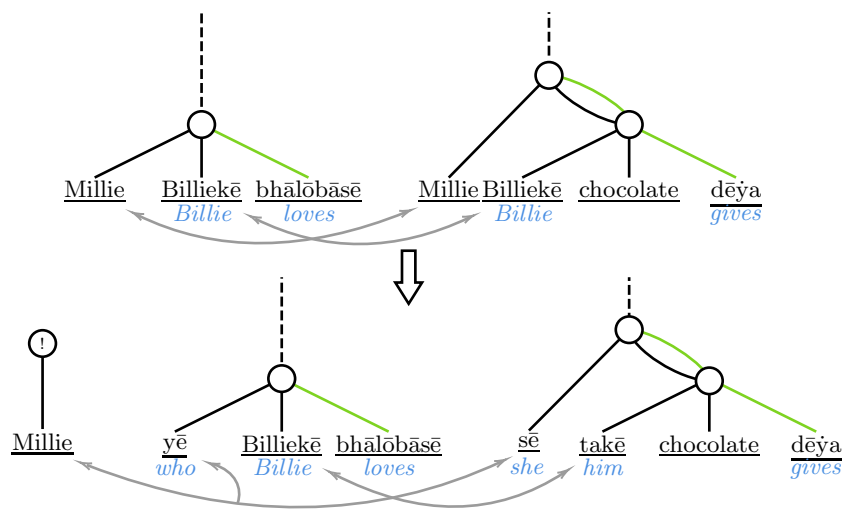

	\centering
	\include{diagrams/plin2}
	\caption{Example 2 of connection via relative pronouns}
	\label{ad}
\end{figure}

%quotation mark gaps need to be fixed
Or, if the repeated noun is inanimate, then one can use  $\underset{\color{subblue}{which/that}}{\text{`yā}}$\dots $\underset{\color{subblue}{that/it}}{\text{tā'}}$ as the connecting pronoun. We see that this rule and its subsequent tree diagram are different from the English counterpart \cite{main}, even in the case of direct translation.
In Bengali, if the relative clause  starts with the pronoun 
$\underset{\color{subblue}{who}}{\text{`yē'}}$, 
we almost always need a pronoun in the main clause, like 
$\underset{\color{subblue}{he/she}}{\text{`sē',}}$ 
$\underset{\color{subblue}{he/she (hon.)}}{\text{`tini',}}$
or $\underset{\color{subblue}{they}}{\text{`tārā',}}$ 
to point back to the noun the clause is describing. For example, in ``Millie $\underset{\color{subblue}{who}}{\text{yē}}$ 
Billiekē $\underset{\color{subblue}{loves}}{\text{bhālōbāsē,}}$ 
$\underset{\color{subblue}{she}}{\text{sē}}$ $\underset{\color{subblue}{him}}{\text{tākē}}$ chocolate 
$\underset{\color{subblue}{gives}}{\text{dēẏa,''}}$ 
the part $\underset{\color{subblue}{who}}{\text{`jē}}$ 
Billiekē $\underset{\color{subblue}{loves}}{\text{bhālōbāsē',}}$ 
describes `Millie's preference for `Billie', and the pronoun 
$\underset{\color{subblue}{she}}{\text{`sē'}}$ 
 makes it clear that `Millie' is the one giving the chocolate. If we drop it, as in 
``Millie $\underset{\color{subblue}{who}}{\text{yē}}$ 
Billiekē $\underset{\color{subblue}{loves}}{\text{bhālōbāsē,}}$  
$\underset{\color{subblue}{him}}{\text{tākē}}$ chocolate 
$\underset{\color{subblue}{gives}}{\text{dēẏa,''}}$ the sentence sounds incomplete and confusing because it does not become obvious who is doing the action in the context of Bengali. The only time we can leave out the pronoun $\underset{\color{subblue}{he/she}}{\text{`sē'}}$ is when the main clause does not need a subject. This generally occurs in the case of imperative sentences, for example
$\underset{\color{subblue}{who}}{\text{``Yē}}$ 
$\underset{\color{subblue}{guest}}{\text{atithi}}$ 
$\underset{\color{subblue}{came}}{\text{āschē,}}$ 
$\underset{\color{subblue}{him/her}}{\text{tāke}}$ 
$\underset{\color{subblue}{snacks}}{\text{jalakhābāra}}$ 
$\underset{\color{subblue}{give}}{\text{dāo.''}}$ 
So, while English does not need a resumptive pronoun, in Bengali including $\underset{\color{subblue}{he/she}}{\text{`sē',}}$ 
$\underset{\color{subblue}{he/she (hon.)}}{\text{`tini',}}$ 
or $\underset{\color{subblue}{they}}{\text{`tārā'}}$ is usually necessary.
\ent 
Lastly, though unconventional, if both sentences have identical objects, we can connect them by the relative pronouns $\underset{\color{subblue}{whom}}{\text{`yākē'}}$ or $\underset{\color{subblue}{which}}{\text{`yā'.}}$ This way of connecting two sentences is quite unnatural, but it is still grammatically correct and preserves the meaning. Instead, a more natural approach would be connecting by some suitable conjunctions. An example of such linking is:
\begin{figure}[H]
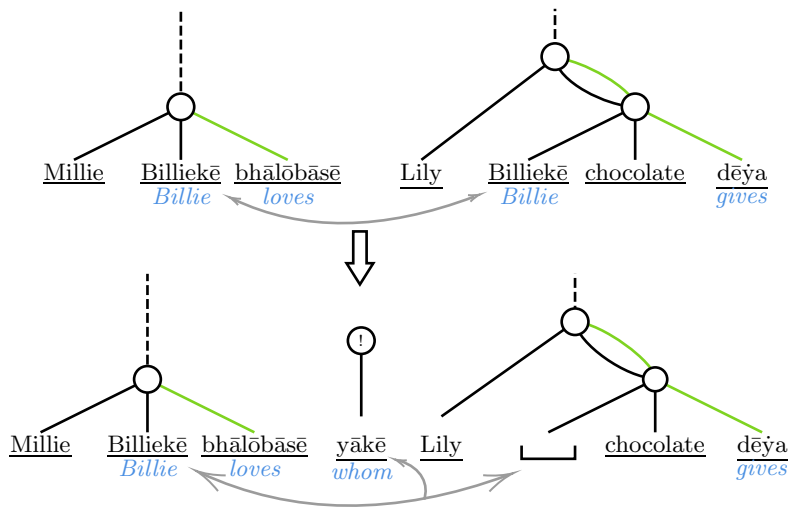

	\centering
	\include{diagrams/plin3}
	\caption{Example 3 of connection via relative pronouns}
	\label{add}
\end{figure}
\textbf{Connection via Reflexive Pronouns:}
Sometimes, in the same sentence,  a single noun can occur twice as both subject and object. In such cases, we use a reflexive pronoun, like:
\begin{figure}[H]
	\centering
	\tikzset{every picture/.style={line width=1pt}} %set default line width to 0.75pt        

\begin{tikzpicture}[x=1pt,y=1pt,yscale=-1,xscale=1]
%uncomment if require: \path (0,1022); %set diagram left start at 0, and has height of 1022

%Straight Lines [id:da6074978494955683] 
\draw  [dash pattern={on 3.75pt off 1.5pt}]  (764,840) -- (764,820) ;
%Straight Lines [id:da1198284786498669] 
\draw    (702,860) -- (764,840) ;
%Straight Lines [id:da947424467485267] 
\draw [color={rgb, 255:red, 0; green, 0; blue, 0 }  ,draw opacity=1 ]   (764,860) -- (764,840) ;
%Straight Lines [id:da5915771758664026] 
\draw [color={rgb, 255:red, 126; green, 211; blue, 33 }  ,draw opacity=1 ]   (830,860) -- (764,840) ;
%Shape: Circle [id:dp1960756878102301] 
\draw  [fill={rgb, 255:red, 255; green, 255; blue, 255 }  ,fill opacity=1 ] (759,840) .. controls (759,837.24) and (761.24,835) .. (764,835) .. controls (766.76,835) and (769,837.24) .. (769,840) .. controls (769,842.76) and (766.76,845) .. (764,845) .. controls (761.24,845) and (759,842.76) .. (759,840) -- cycle ;
%Straight Lines [id:da10272386804699607] 
\draw  [dash pattern={on 3.75pt off 1.5pt}]  (942.22,840) -- (942.22,820) ;
%Straight Lines [id:da851498834265952] 
\draw    (880,860) -- (942.22,840) ;
%Straight Lines [id:da8523302197099732] 
\draw [color={rgb, 255:red, 0; green, 0; blue, 0 }  ,draw opacity=1 ]   (942.22,860) -- (942.22,840) ;
%Straight Lines [id:da5916198838746691] 
\draw [color={rgb, 255:red, 126; green, 211; blue, 33 }  ,draw opacity=1 ]   (1010,860) -- (942.22,840) ;
%Shape: Circle [id:dp28830715778690685] 
\draw  [fill={rgb, 255:red, 255; green, 255; blue, 255 }  ,fill opacity=1 ] (937.22,840) .. controls (937.22,837.24) and (939.45,835) .. (942.22,835) .. controls (944.98,835) and (947.22,837.24) .. (947.22,840) .. controls (947.22,842.76) and (944.98,845) .. (942.22,845) .. controls (939.45,845) and (937.22,842.76) .. (937.22,840) -- cycle ;
%Right Arrow [id:dp6404250496961941] 
\draw   (850,832.5) -- (862,832.5) -- (862,830) -- (870,835) -- (862,840) -- (862,837.5) -- (850,837.5) -- cycle ;

% Text Node
\draw (702,860) node [anchor=north] [inner sep=0.75pt]  [font=\footnotesize] [align=left] {\underline{Millie}};
% Text Node
\draw (766.5,859) node [anchor=north] [inner sep=0.75pt]  [font=\footnotesize] [align=left] {\underline{Milliekē}};
% Text Node
\draw (766.5,868) node [anchor=north] [inner sep=0.75pt]  [font=\footnotesize,color={rgb, 255:red, 74; green, 144; blue, 226 }  ,opacity=1 ] [align=left] {\textit{\textcolor[rgb]{0.29,0.56,0.89}{Millie}}};
% Text Node
\draw (834,859) node [anchor=north] [inner sep=0.75pt]  [font=\footnotesize] [align=left] {\underline{dēkhē}};
% Text Node
\draw (835.5,868) node [anchor=north] [inner sep=0.75pt]  [font=\footnotesize,color={rgb, 255:red, 74; green, 144; blue, 226 }  ,opacity=1 ] [align=left] {\textit{\textcolor[rgb]{0.29,0.56,0.89}{sees}}};
% Text Node
\draw (880,860) node [anchor=north] [inner sep=0.75pt]  [font=\footnotesize] [align=left] {\underline{Millie}};
% Text Node
\draw (945,859) node [anchor=north] [inner sep=0.75pt]  [font=\footnotesize] [align=left] {\underline{nijēkē}};
% Text Node
\draw (945,868) node [anchor=north] [inner sep=0.75pt]  [font=\footnotesize,color={rgb, 255:red, 74; green, 144; blue, 226 }  ,opacity=1 ] [align=left] {\textit{\textcolor[rgb]{0.29,0.56,0.89}{herself}}};
% Text Node
\draw (1014,859) node [anchor=north] [inner sep=0.75pt]  [font=\footnotesize] [align=left] {\underline{dēkhē}};
% Text Node
\draw (1014,868) node [anchor=north] [inner sep=0.75pt]  [font=\footnotesize,color={rgb, 255:red, 74; green, 144; blue, 226 }  ,opacity=1 ] [align=left] {\textit{\textcolor[rgb]{0.29,0.56,0.89}{sees}}};
% Connection
\draw [color={rgb, 255:red, 155; green, 155; blue, 155 }  ,draw opacity=1 ]   (718.01,871.73) .. controls (728.71,875.82) and (738.15,875.96) .. (746.32,872.15) ;
\draw [shift={(748,871.29)}, rotate = 157.22] [color={rgb, 255:red, 155; green, 155; blue, 155 }  ,draw opacity=1 ][line width=0.75]    (4.37,-1.32) .. controls (2.78,-0.56) and (1.32,-0.12) .. (0,0) .. controls (1.32,0.12) and (2.78,0.56) .. (4.37,1.32)   ;
\draw [shift={(716,870.92)}, rotate = 19.06] [color={rgb, 255:red, 155; green, 155; blue, 155 }  ,draw opacity=1 ][line width=0.75]    (4.37,-1.32) .. controls (2.78,-0.56) and (1.32,-0.12) .. (0,0) .. controls (1.32,0.12) and (2.78,0.56) .. (4.37,1.32)   ;
% Connection
\draw [color={rgb, 255:red, 155; green, 155; blue, 155 }  ,draw opacity=1 ]   (895.85,872.04) .. controls (906.93,876.59) and (917.56,876.54) .. (927.73,871.9) ;
\draw [shift={(929.5,871.04)}, rotate = 157.61] [color={rgb, 255:red, 155; green, 155; blue, 155 }  ,draw opacity=1 ][line width=0.75]    (4.37,-1.32) .. controls (2.78,-0.56) and (1.32,-0.12) .. (0,0) .. controls (1.32,0.12) and (2.78,0.56) .. (4.37,1.32)   ;
\draw [shift={(894,871.24)}, rotate = 20.12] [color={rgb, 255:red, 155; green, 155; blue, 155 }  ,draw opacity=1 ][line width=0.75]    (4.37,-1.32) .. controls (2.78,-0.56) and (1.32,-0.12) .. (0,0) .. controls (1.32,0.12) and (2.78,0.56) .. (4.37,1.32)   ;

\end{tikzpicture}
	\caption{Example of pronominal link via reflexive pronoun}
	\label{fig9}
\end{figure}
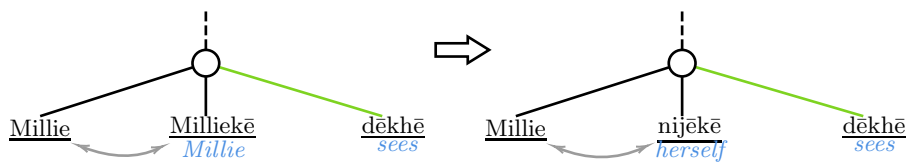
\paragraph{Bengali Accusative Case Marker}\label{case}
At this point in the discussion, it will be useful to look at the Bengali accusative case marker as it is an important part of the hybrid grammar structure of Bengali and something that makes Bengali distinct from English. Instead of giving a full account of the Bengali accusative case marker, we will only look at the cases that came up in our examples.
\ent
\phantomsection \label{ke issue}
To begin with, we see in Fig.\ref{ke}, `Billie' is written as `Billie-kē'. Here  `-kē'   is  an accusative case marker. For this particular case, `-kē'  marks the direct object of the verb. It also indicates `Billie' is an animate entity. If, say, `Billie' were the  name of a book, the sentence would be ``Millie Billie $\underset{\color{subblue}loves}{\text{bhālōbāsē,''}}$ without the `-kē'.
\ent
Another use of accusative case marker in Bengali is to act as an adposition. In  Fig.\ref{ad} and Fig.\ref{add}, we can see that we have used the tree structures for the adposition rule  without using any adposition explicitly (no blue edge). This is because Bengali does not generally use independent prepositions or postpositions in the way English does. Instead, in Bengali, nouns have bound morphemes, like `-kē', `-r', `-tē', `-thēkē', etc., that act similar to adpositions. So, when we see a form like `Billie-kē', we already know `Billie' is the the recipient of the action. In this case, `-kē' roughly corresponds to the English preposition `to', but grammatically it is a case marker rather than an independent adposition.
\ent
\phantomsection
\label{par:prati} %prati or dike?
Another thing we would like to mention here is that Bengali has the concept of a silent or implicit adposition, `prati' that can  act as a postposition. Using it explicitly in sentences does not affect the grammar or the meaning of the sentence, but often it is silent (otherwise, the sentence sounds unnatural), and its meaning is picked up from context. For example, the sentence ``Millie Billier $\underset{\color{subblue}towards}{\text{prati}}$ $\underset{\color{subblue}flower}{\text{phul}}$ $\underset{\color{subblue}throws}{\text{chũṛē''}}$ (``Millie throws flower towards Billie'') is generally expressed simply as  ``Millie Billiekē $\underset{\color{subblue}flower}{\text{phul}}$ $\underset{\color{subblue}throws}{\text{chũṛē,''}}$ where the directional sense of `towards' is understood without the explicit mention of `prati' by the use of accusative case marker `-kē'.  However, if the recipient of action is inanimate it is  more natural to explicitly use `prati' or similar adpositions.  
\paragraph{Type II:}
In Bengali, one can also form a compound sentence either by the use of a verb with sentential complement or suitable conjunctions. We show the production rules and planar tree fragments for such sentences in the following table:
\begin{table}[H]
	\centering
	\include{diagrams/compoundsentence2}
	\caption{Rules and planar tree fragments for compound sentence with conjunction and sentential complement verb}
	\label{table2}
\end{table}

\section{Text Diagrams}
Although hybrid grammar provides a way to describe a language in a methodical way using  rules, links, and boundaries, the formulation has its shortcomings. One obvious problem it faces  is that, as additional structures arise in the language, it starts becoming cumbersome, as it has to introduce new symbols to reason about those structures. For example, in the case of pronominal links, we had to adopt notations such as `\tikz[baseline=(exmark.base)]{
  \node[draw, circle, minimum size=1.0em, line width=0.6pt, inner sep=0pt] (exmark) {\tiny !};
}', `\rotatebox{270}{]}', etc. Text diagrams address this issue by offering a single unified mathematical structure. Unlike tree diagrams, text diagrams are not planar, as they allow lines to intersect. Moreover, the diagrams can be simplified under certain conditions. In the following discussions, we will talk about how we can generate text diagrams from tree planar diagrams for simple and complex sentences in Bengali.

\subsection{Simple Sentence}
 Similar to  \cite{main},  to get text diagrams from tree diagrams in Bengali we will need to replace the sentence types with sentence dependent number of $\texttt{NP}$ wires. This gives us the following  generating text diagrams for simple sentences in Bengali:
\begin{figure}[H]
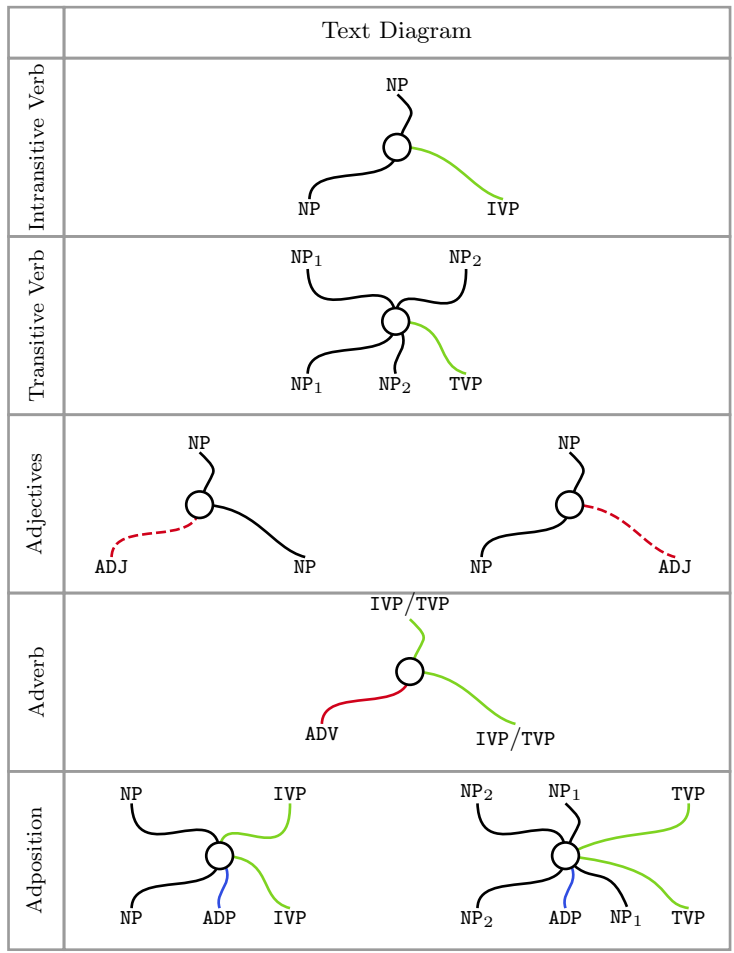

	\centering
	\include{diagrams/bengalitexttree}
	\caption{Text diagrams for Bengali Simple Sentences}
	\label{table3}
\end{figure}

\subsection{Compound Sentence}
\paragraph{Type I: }
For compound sentences, in case of pronominal links, we will sequentially compose the text diagrams in such a way that there is a type match. Moreover, the composition will follow the rules:
\begin{figure}[H]
	\centering
	\tikzset{every picture/.style={line width=1pt}} %set default line width to 0.75pt        

\begin{tikzpicture}[x=1pt,y=1pt,yscale=-1,xscale=1]
%uncomment if require: \path (0,929); %set diagram left start at 0, and has height of 929

%Straight Lines [id:da36217739832152085] 
\draw [color={rgb, 255:red, 0; green, 0; blue, 0 }  ,draw opacity=1 ]   (170,360) -- (170,330) ;
%Straight Lines [id:da19121526558535962] 
\draw [color={rgb, 255:red, 0; green, 0; blue, 0 }  ,draw opacity=1 ]   (200,360) -- (200,330) ;
%Straight Lines [id:da5104853065685193] 
\draw [color={rgb, 255:red, 0; green, 0; blue, 0 }  ,draw opacity=1 ]   (280,360) -- (280,330) ;
%Straight Lines [id:da8769310560663157] 
\draw [color={rgb, 255:red, 0; green, 0; blue, 0 }  ,draw opacity=1 ]   (310,360) -- (310,330) ;
%Straight Lines [id:da16303798564209349] 
\draw [color={rgb, 255:red, 0; green, 0; blue, 0 }  ,draw opacity=1 ]   (240,360) -- (240,330) ;
%Straight Lines [id:da47686076684427026] 
\draw [color={rgb, 255:red, 0; green, 0; blue, 0 }  ,draw opacity=1 ]   (411.25,360) -- (381.25,330) ;
%Straight Lines [id:da27411633209347097] 
\draw [color={rgb, 255:red, 0; green, 0; blue, 0 }  ,draw opacity=1 ]   (411.25,330) -- (381.25,360) ;
%Curve Lines [id:da5079398557616478] 
\draw [color={rgb, 255:red, 155; green, 155; blue, 155 }  ,draw opacity=1 ] [dash pattern={on 3.75pt off 2.25pt}]  (381.25,360) .. controls (366.25,360) and (366.25,330) .. (381.25,330) ;
%Straight Lines [id:da6933936204379434] 
\draw [color={rgb, 255:red, 0; green, 0; blue, 0 }  ,draw opacity=1 ]   (450,360) -- (450,330) ;
%Straight Lines [id:da9335423884762494] 
\draw [color={rgb, 255:red, 0; green, 0; blue, 0 }  ,draw opacity=1 ]   (523,360) -- (493,330) ;
%Straight Lines [id:da2954285947681864] 
\draw [color={rgb, 255:red, 0; green, 0; blue, 0 }  ,draw opacity=1 ]   (523,330) -- (493,360) ;
%Curve Lines [id:da21102382887676352] 
\draw [color={rgb, 255:red, 155; green, 155; blue, 155 }  ,draw opacity=1 ] [dash pattern={on 3.75pt off 2.25pt}]  (523,360) .. controls (538,360) and (538,330) .. (523,330) ;
%Curve Lines [id:da2879607568627468] 
\draw [color={rgb, 255:red, 155; green, 155; blue, 155 }  ,draw opacity=1 ] [dash pattern={on 3.75pt off 1.5pt}]  (170,330) .. controls (199.5,330.5) and (170.5,360.5) .. (200,360) ;
%Curve Lines [id:da7614938859682832] 
\draw [color={rgb, 255:red, 155; green, 155; blue, 155 }  ,draw opacity=1 ] [dash pattern={on 3.75pt off 1.5pt}]  (280,360) .. controls (309.5,360.5) and (280.5,330.5) .. (310,330) ;

% Text Node
\draw (339,345) node   [align=left] {and};
% Text Node
\draw (261,346.5) node    {$=$};
% Text Node
\draw (219,346.5) node    {$=$};
% Text Node
\draw (471,346.5) node    {$=$};
% Text Node
\draw (429,346.5) node    {$=$};

\end{tikzpicture}
	\caption{Composition rules for compound sentences with pronominal links}
	\label{fig11}
\end{figure}
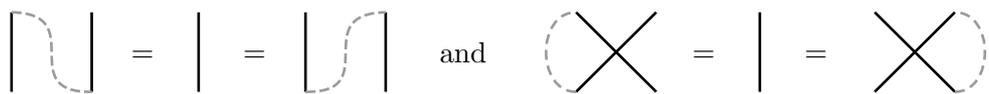
Along with this, the fact that wires are allowed to cross one another during  composition  gives the text diagrams much more fluidity than tree diagrams. This, in turn, allows text diagrams to simplify some of the linguistic complexities inherent in the language of consideration, indicating  more towards structural similarities between languages. For example, in par.\ref{par:diff}, pg.\pageref{par:prati}, we came across a rule for pronominal linking that behaves differently in Bengali and English. But when we shift from tree diagrams to text diagrams, because of the composition rules of the text diagrams, the final complex sentence becomes structurally more similar. For instance, the earlier compound sentence in Fig.\ref{ad} will have the following text diagram:
\begin{figure}[H]
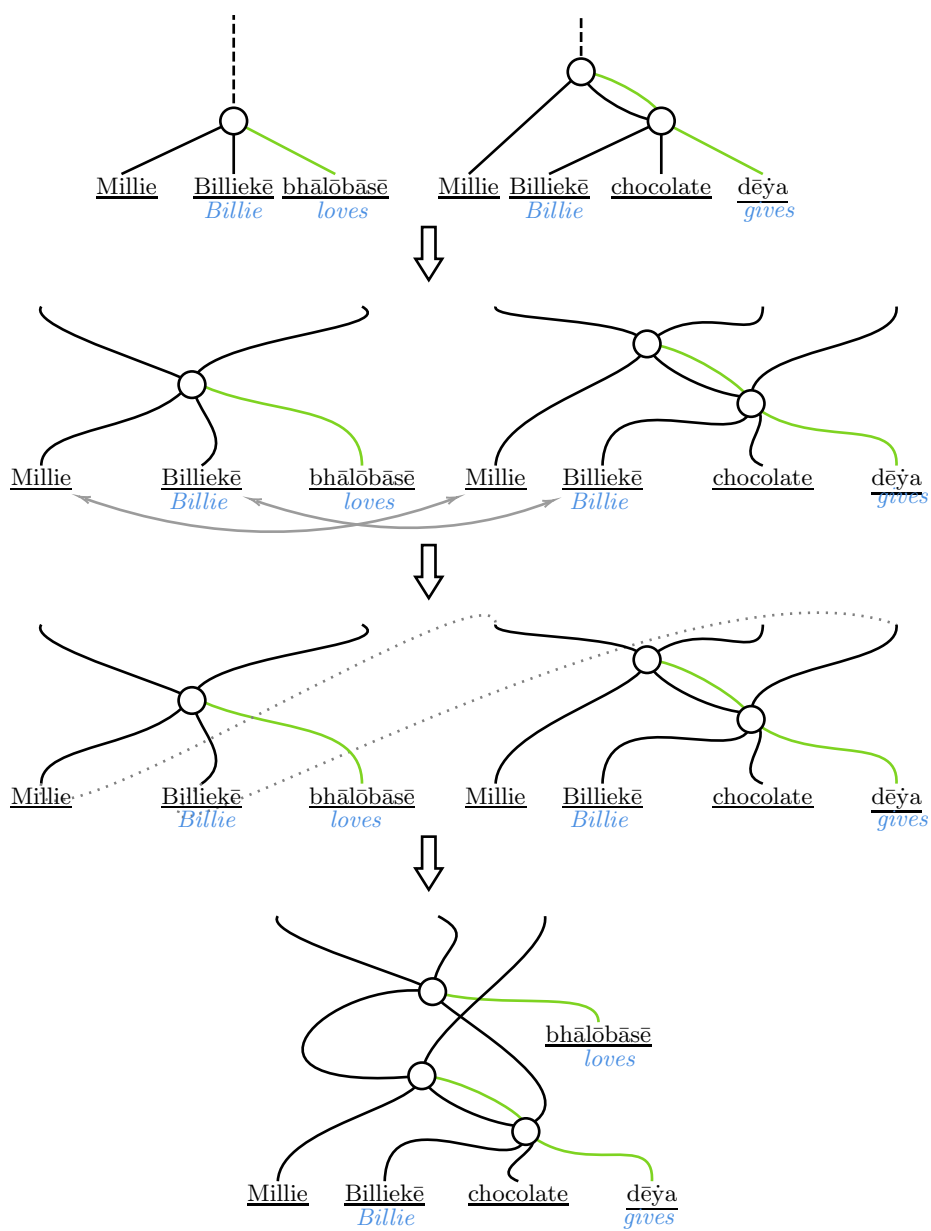

	\centering
	\include{diagrams/TXT_DIAGRAM}
	\caption{Corresponding text diagram of Fig.\ref{ad}}
	\label{table4}
\end{figure}

\paragraph{Type II:}
For compound sentences with sentential complement verbs and conjunctions, we will use phase bubbles  \cite{main}, to get text diagrams for Bengali:
\begin{table}[H]
	\centering
	\tikzset{every picture/.style={line width=1pt}} %set default line width to 0.75pt        

\begin{tikzpicture}[x=1pt,y=1pt,yscale=-1,xscale=1]
%uncomment if require: \path (0,1015); %set diagram left start at 0, and has height of 1015

%Shape: Rectangle [id:dp9890972049121283] 
\draw  [color={rgb, 255:red, 155; green, 155; blue, 155 }  ,draw opacity=1 ] (50,80) -- (72.06,80) -- (72.06,103.28) -- (50,103.28) -- cycle ;
%Shape: Rectangle [id:dp9790687148124685] 
\draw  [color={rgb, 255:red, 155; green, 155; blue, 155 }  ,draw opacity=1 ] (72.06,80) -- (300,80) -- (300,103.28) -- (72.06,103.28) -- cycle ;
%Shape: Rectangle [id:dp7720335516970078] 
\draw  [color={rgb, 255:red, 155; green, 155; blue, 155 }  ,draw opacity=1 ] (50,103.67) -- (72.06,103.67) -- (72.06,205) -- (50,205) -- cycle ;
%Shape: Rectangle [id:dp11834984490238187] 
\draw  [color={rgb, 255:red, 155; green, 155; blue, 155 }  ,draw opacity=1 ] (72.06,103.28) -- (300,103.28) -- (300,204.62) -- (72.06,204.62) -- cycle ;
%Shape: Rectangle [id:dp048338347685769834] 
\draw  [color={rgb, 255:red, 155; green, 155; blue, 155 }  ,draw opacity=1 ] (50,205) -- (72.06,205) -- (72.06,315.29) -- (50,315.29) -- cycle ;
%Shape: Rectangle [id:dp10680658321351899] 
\draw  [color={rgb, 255:red, 155; green, 155; blue, 155 }  ,draw opacity=1 ] (72.06,205) -- (300,205) -- (300,315.29) -- (72.06,315.29) -- cycle ;
%Shape: Polygon Curved [id:ds5215348499579505] 
\draw  [fill={rgb, 255:red, 155; green, 155; blue, 155 }  ,fill opacity=1 ] (160.26,146.58) .. controls (160.45,147.11) and (231.95,159.3) .. (186.91,176.46) .. controls (141.88,193.61) and (160.07,146.06) .. (160.26,146.58) -- cycle ;
%Straight Lines [id:da07701012577922695] 
\draw    (193.08,190) -- (205.33,190) ;
%Curve Lines [id:da7112080533922026] 
\draw    (130,120) .. controls (125.4,137.16) and (159.65,134.79) .. (160.26,146.58) ;
%Curve Lines [id:da953167494051975] 
\draw    (160.26,146.58) .. controls (155.66,163.74) and (129.39,178.21) .. (130,190) ;
%Curve Lines [id:da023586304906661137] 
\draw [color={rgb, 255:red, 126; green, 211; blue, 33 }  ,draw opacity=1 ]   (160.26,146.58) .. controls (203.76,145.21) and (229.69,173.46) .. (230,190) ;
%Curve Lines [id:da6305524499049665] 
\draw    (180.79,164.2) .. controls (176.19,181.36) and (169.39,178.21) .. (170,190) ;
%Curve Lines [id:da4993306215443455] 
\draw    (199.2,190) .. controls (203.8,173.66) and (179.87,179.68) .. (180.79,164.2) ;
%Curve Lines [id:da995738879004549] 
\draw    (200,120) .. controls (208.5,139) and (180.17,152.41) .. (180.79,164.2) ;
%Curve Lines [id:da7780894367242713] 
\draw    (170,120) .. controls (160.5,132) and (179.87,153.17) .. (180.79,164.2) ;
%Shape: Rectangle [id:dp0807981702806464] 
\draw  [fill={rgb, 255:red, 255; green, 255; blue, 255 }  ,fill opacity=1 ] (174.66,158.07) -- (186.91,158.07) -- (186.91,170.33) -- (174.66,170.33) -- cycle ;
%Shape: Polygon Curved [id:ds5795370310388793] 
\draw  [fill={rgb, 255:red, 155; green, 155; blue, 155 }  ,fill opacity=1 ] (187.73,245.92) .. controls (194.02,238.9) and (296.92,265.18) .. (263.25,283.68) .. controls (229.58,302.19) and (181.43,252.94) .. (187.73,245.92) -- cycle ;
%Straight Lines [id:da8369398433644372] 
\draw    (256,300) -- (266,300) ;
%Curve Lines [id:da17321684144478577] 
\draw    (238.08,267.7) .. controls (233.36,285.32) and (229.37,287.89) .. (230,300) ;
%Curve Lines [id:da5313816313006487] 
\draw    (260,300) .. controls (264.72,279.54) and (237.13,287.08) .. (238.08,267.7) ;
%Curve Lines [id:da5905015708196854] 
\draw    (260,220) .. controls (268.5,244) and (237.45,255.59) .. (238.08,267.7) ;
%Curve Lines [id:da31870189994106923] 
\draw    (230,220) .. controls (217.73,233.22) and (237.13,256.37) .. (238.08,267.7) ;
%Shape: Rectangle [id:dp6531820751479063] 
\draw  [fill={rgb, 255:red, 255; green, 255; blue, 255 }  ,fill opacity=1 ] (231.78,261.41) -- (244.37,261.41) -- (244.37,273.99) -- (231.78,273.99) -- cycle ;
%Shape: Polygon Curved [id:ds17542519881943264] 
\draw  [fill={rgb, 255:red, 155; green, 155; blue, 155 }  ,fill opacity=1 ] (187.17,245.92) .. controls (180.95,238.9) and (79.15,265.18) .. (112.46,283.68) .. controls (145.77,302.19) and (193.4,252.94) .. (187.17,245.92) -- cycle ;
%Straight Lines [id:da44277999029070736] 
\draw    (156.11,300) -- (149.82,300) -- (143.53,300) ;
%Curve Lines [id:da8518024894964178] 
\draw    (137.37,267.7) .. controls (142.04,290.31) and (150.44,284.46) .. (149.82,300) ;
%Curve Lines [id:da14784818563179603] 
\draw    (120,300) .. controls (115.33,284.06) and (138.3,282.8) .. (137.37,267.7) ;
%Curve Lines [id:da32076509939187803] 
\draw    (120,220) .. controls (107.86,238.25) and (137.99,255.59) .. (137.37,267.7) ;
%Curve Lines [id:da5406305778927961] 
\draw    (150,220) .. controls (162.14,233.22) and (138.3,256.37) .. (137.37,267.7) ;
%Shape: Rectangle [id:dp6072089984246121] 
\draw  [fill={rgb, 255:red, 255; green, 255; blue, 255 }  ,fill opacity=1 ] (143.59,261.41) -- (131.14,261.41) -- (131.14,273.99) -- (143.59,273.99) -- cycle ;
%Curve Lines [id:da4809442594094173] 
\draw [color={rgb, 255:red, 144; green, 19; blue, 254 }  ,draw opacity=1 ]   (187.73,245.92) .. controls (188.54,256.94) and (176.47,286.77) .. (190,300) ;
%Shape: Circle [id:dp04962606930992042] 
\draw  [fill={rgb, 255:red, 255; green, 255; blue, 255 }  ,fill opacity=1 ] (155.26,146.58) .. controls (155.26,143.82) and (157.5,141.58) .. (160.26,141.58) .. controls (163.02,141.58) and (165.26,143.82) .. (165.26,146.58) .. controls (165.26,149.34) and (163.02,151.58) .. (160.26,151.58) .. controls (157.5,151.58) and (155.26,149.34) .. (155.26,146.58) -- cycle ;
%Shape: Circle [id:dp041053711465510956] 
\draw  [fill={rgb, 255:red, 255; green, 255; blue, 255 }  ,fill opacity=1 ] (182.73,245.92) .. controls (182.73,243.16) and (184.97,240.92) .. (187.73,240.92) .. controls (190.49,240.92) and (192.73,243.16) .. (192.73,245.92) .. controls (192.73,248.68) and (190.49,250.92) .. (187.73,250.92) .. controls (184.97,250.92) and (182.73,248.68) .. (182.73,245.92) -- cycle ;

% Text Node
\draw (61.03,154.33) node  [font=\footnotesize,rotate=-270] [align=left] {{\scriptsize Sent. Compt. Verb}};
% Text Node
\draw (61.03,260.15) node  [font=\footnotesize,rotate=-270] [align=left] {{\scriptsize Conjunction}};
% Text Node
\draw (126,300) node [anchor=north] [inner sep=0.75pt]  [font=\scriptsize]  {${{\texttt{NP}}_{\texttt{S}}}_{o1}$};
% Text Node
\draw (236,300) node [anchor=north] [inner sep=0.75pt]  [font=\scriptsize]  {${{\texttt{NP}}_{\texttt{S}}}_{o2}$};
% Text Node
\draw (149.82,300) node [anchor=north] [inner sep=0.75pt]  [font=\scriptsize]  {${~~~~\texttt{\underline{NP}}}_{\texttt{S}_{i1}}$};
% Text Node
\draw (260,300) node [anchor=north] [inner sep=0.75pt]  [font=\scriptsize]  {$~~~~~\underline{\texttt{NP}}_{\texttt{S}_{i2}}$};
% Text Node
\draw (190,300) node [anchor=north] [inner sep=0.75pt]  [font=\scriptsize]  {$\texttt{CNJ}$};
% Text Node
\draw (233.89,218.05) node [anchor=north west][inner sep=0.75pt]    {$\dotsc $};
% Text Node
\draw (125,218) node [anchor=north east][inner sep=0.75pt]  [xscale=-1]  {$\dotsc $};
% Text Node
\draw (130,120) node [anchor=south] [inner sep=0.75pt]  [font=\scriptsize]  {$\texttt{NP}$};
% Text Node
\draw (230,190) node [anchor=north] [inner sep=0.75pt]  [font=\scriptsize]  {$\texttt{SCV}$};
% Text Node
\draw (130,190) node [anchor=north] [inner sep=0.75pt]  [font=\scriptsize]  {$\texttt{NP}$};
% Text Node
\draw (173,190) node [anchor=north] [inner sep=0.75pt]  [font=\scriptsize]  {$\texttt{NP}_{\texttt{S}_{o}}$};
% Text Node
\draw (170,120) node [anchor=south] [inner sep=0.75pt]  [font=\scriptsize]  {$\ \ \ \ \ \ \ \ \ \ \ \ \ \ \texttt{NP}\times (  o+  i)$};
% Text Node
\draw (173.59,117.83) node [anchor=north west][inner sep=0.75pt]    {$\dotsc $};
% Text Node
\draw (175.5,92.5) node  [font=\footnotesize] [align=left] {Text Diagram};
% Text Node
\draw (137.37,267.7) node  [font=\footnotesize]  {$\textit{\texttt{S}}_{1}$};
% Text Node
\draw (238.08,267.7) node  [font=\footnotesize]  {$\textit{\texttt{S}}_{2}$};
% Text Node
\draw (180.79,164.2) node  [font=\footnotesize]  {$\textit{\texttt{S}}$};
% Text Node
\draw (199.2,190) node [anchor=north] [inner sep=0.75pt]  [font=\scriptsize]  {$~~~\underline{\texttt{NP}}_{\texttt{S}_{i}}$};

\end{tikzpicture}
	\caption{Text diagrams for type II Bengali compound sentences }
	\label{table5}
\end{table}
From the above analysis, we  can come to the understanding that text diagrams reduce the structural rigidity of Bengali, making it more comparable to English. Moreover, we see that text diagrams have a structure similar to that of graphs. We can consider the nodes and the terminal symbols of the text diagrams as vertices, and others as $k$-colorable edges, for some $k$, depending on the production rules. Also, for type II compound sentences instead of phase bubbles we can consider parallel composition (disconnected graphs). Furthermore,  given a text, any text diagram can be broken down into some  generating pieces. The generating pieces are text diagrams of simple sentences, which,  if connected in parallel or sequentially by the rules of text diagrams of compound sentences, give us the whole text. Therefore, we can state the following lemma,
\ent
\begin{lemma}\label{lemma:isomorph_bijection}\label{bi 1}
	Let $T_\mathcal{E}$ and $T_\mathcal{B}$ be two texts generated by the production rules of  the English language and  Bengali language, respectively. Let $S_\mathcal{E}$ be the set of all  terminal symbols of $T_\mathcal{E}$ and $S_\mathcal{B}$ be the set of all terminal symbols of $T_\mathcal{B}$. If there is a bijection (in terms of translation) between $S_\mathcal{E}$ and $S_\mathcal{B}$, then the text diagrams $\tau_\mathcal{E}$ of $T_\mathcal{E}$ and $\tau_\mathcal{B}$ of $T_\mathcal{B}$ are isomorphic in the graph theoretic sense. 
\end{lemma}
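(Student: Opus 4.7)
The plan is to proceed by structural induction on the way the text $T$ is generated by the hybrid grammar, viewing text diagrams as labeled multigraphs whose vertices are the nodes and underlined terminal symbols, and whose edges carry colors coming from the grammatical type (\texttt{NP}, \texttt{IVP}/\texttt{TVP}, \texttt{ADJ}, \texttt{ADV}, \texttt{ADP}, \texttt{CNJ}, \texttt{SCV}). I would use the decomposition already pointed out before the lemma: any text diagram is built from generating pieces associated to simple sentences, glued by the Type~I pronominal-link composition or the Type~II sentential-complement / conjunction composition.

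For the base case, I would list the generating text diagrams of Bengali simple sentences in Fig.~\ref{table3} and exhibit, case by case, the English counterpart (intransitive, transitive, adjective, adverb, adposition). Each pair has the same number of wires, the same node degrees, and the same color distribution on incident edges; under the assumed terminal-symbol bijection, the translation sends each underlined Bengali word to its English partner, producing a color- and label-preserving graph isomorphism. The seeming asymmetries of the tree fragments (e.g.\ the missing copula discussed on p.\pageref{is issue}, the adposition rules discussed near p.\pageref{par:prati}, or the word-order swap for \texttt{TVP}) are precisely those that the text-diagram formalism absorbs by allowing wires to cross, so they disappear at the level of the graph.

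For the inductive step, suppose the lemma holds for all proper sub-texts of $T_\mathcal{B}$ and their English translations. I would handle the two composition types separately. For Type~I, the composition rules in Fig.~\ref{fig11} show that pronominal linking is implemented by identifying or crossing \texttt{NP}-wires between sequentially composed sub-diagrams; because these rules are the same for Bengali and English, sewing the two inductively-isomorphic sub-diagrams along matched \texttt{NP}-wires yields isomorphic composite graphs, which is exactly what is illustrated by the fact that the Bengali diagram in Fig.~\ref{table4} collapses to the same shape as its English translation. For Type~II, the phase-bubble construction of Tab.~\ref{table5} is itself defined uniformly, so enclosing inductively-isomorphic sub-diagrams inside an \texttt{SCV}- or \texttt{CNJ}-bubble produces isomorphic wrappers; one checks that the external \texttt{NP}-wires $\texttt{NP}_{\mathtt{S}_{i}}$, $\texttt{NP}_{\mathtt{S}_{o}}$ are matched by the terminal bijection.

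The main obstacle I expect is the pronominal-link case, where Bengali and English genuinely differ syntactically: Bengali may introduce a resumptive pronoun such as $\underset{\color{subblue}{she}}{\text{sē}}$ (par.\ \ref{par:diff}), use the accusative marker ``-kē'' in place of an explicit adposition (p.\pageref{ke issue}), or carry a silent ``prati'' (p.\pageref{par:prati}). These produce extra or missing \emph{terminal} symbols, so the hypothesized bijection between $S_\mathcal{E}$ and $S_\mathcal{B}$ does not hold verbatim. I would handle this by arguing that the composition rule for pronominal linking \emph{erases} such auxiliary pronouns and case markers at the text-diagram level — they become identifications of wires rather than separate vertices — so the bijection need only hold on the genuinely content-bearing terminals, and the resulting graphs still match. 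This is the step where care is needed to state precisely what ``terminal symbols'' means (content words modulo composition-rule artifacts), and it is the main conceptual gap that will motivate the restricted scope of the lemma and the later discussion of DisCoCirc's limitations.
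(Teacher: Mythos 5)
Your proposal follows essentially the same route as the paper: the paper's entire justification for this lemma is the paragraph immediately preceding it (text diagrams viewed as graphs with nodes and terminal symbols as vertices and $k$-colorable edges, decomposed into the generating pieces of Fig.~\ref{table3} and glued by the Type I and Type II composition rules, with the terminal-symbol bijection matching the pieces), so your structural induction is just a more explicit write-up of that argument, and the paper offers nothing further beyond the remark at Fig.~\ref{fig12} that the bijection hypothesis excludes copula mismatches. Your final paragraph on resumptive pronouns and case markers correctly flags a point the paper's own discussion leaves unaddressed, namely that the hypothesized bijection on terminal symbols cannot hold verbatim unless such composition-rule artifacts are excluded from $S_\mathcal{B}$, so the caveat you add is a genuine (and needed) sharpening rather than a departure from the paper's approach.
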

Here, the bijection, $S_E \approxeq S_B$, is important because otherwise due to the difference in the production rules of  the respective languages at the level of simple sentences (par.\ref{is issue}, pg.\pageref{is issue}), we might run into cases like: 
\begin{figure}[H]
	\centering
	\tikzset{every picture/.style={line width=1pt}} %set default line width to 0.75pt        

\begin{tikzpicture}[x=1pt,y=1pt,yscale=-1,xscale=1]
%uncomment if require: \path (0,1022); %set diagram left start at 0, and has height of 1022

%Curve Lines [id:da1883985221356811] 
\draw    (510,140) .. controls (524,148.2) and (497,152.2) .. (510,160) ;
%Curve Lines [id:da7718020159619621] 
\draw [color={rgb, 255:red, 126; green, 211; blue, 33 }  ,draw opacity=1 ]   (510,160) .. controls (524,168.2) and (497,172.2) .. (510,180) ;
%Curve Lines [id:da3666396739656904] 
\draw    (510,160) .. controls (524,168.2) and (447,172.2) .. (460,180) ;
%Curve Lines [id:da14281689328212888] 
\draw [color={rgb, 255:red, 208; green, 2; blue, 27 }  ,draw opacity=1 ] [dash pattern={on 4.5pt off 4.5pt}]  (560,180) .. controls (562,170.2) and (528,170.2) .. (510,160) ;
%Curve Lines [id:da08493146632607484] 
\draw    (661.5,140) .. controls (675.5,148.2) and (648.5,152.2) .. (661.5,160) ;
%Curve Lines [id:da6778706860376362] 
\draw    (661.5,160) .. controls (675.5,168.2) and (598.5,172.2) .. (611.5,180) ;
%Curve Lines [id:da778126675265318] 
\draw [color={rgb, 255:red, 208; green, 2; blue, 27 }  ,draw opacity=1 ] [dash pattern={on 4.5pt off 4.5pt}]  (711.5,180) .. controls (713.5,170.2) and (679.5,170.2) .. (661.5,160) ;
%Shape: Circle [id:dp3783238332527149] 
\draw  [fill={rgb, 255:red, 255; green, 255; blue, 255 }  ,fill opacity=1 ] (505,160) .. controls (505,157.24) and (507.24,155) .. (510,155) .. controls (512.76,155) and (515,157.24) .. (515,160) .. controls (515,162.76) and (512.76,165) .. (510,165) .. controls (507.24,165) and (505,162.76) .. (505,160) -- cycle ;
%Shape: Circle [id:dp9615631868397586] 
\draw  [fill={rgb, 255:red, 255; green, 255; blue, 255 }  ,fill opacity=1 ] (656.5,160) .. controls (656.5,157.24) and (658.74,155) .. (661.5,155) .. controls (664.26,155) and (666.5,157.24) .. (666.5,160) .. controls (666.5,162.76) and (664.26,165) .. (661.5,165) .. controls (658.74,165) and (656.5,162.76) .. (656.5,160) -- cycle ;

% Text Node
\draw (560,180) node [anchor=north] [inner sep=0.75pt]  [font=\normalsize] [align=left] {{\footnotesize \underline{handsome}}};
% Text Node
\draw (510,180) node [anchor=north] [inner sep=0.75pt]  [font=\footnotesize] [align=left] {\underline{is}};
% Text Node
\draw (460,180) node [anchor=north] [inner sep=0.75pt]  [font=\footnotesize] [align=left] {\underline{Billie}};
% Text Node
\draw (713.5,188) node [anchor=north] [inner sep=0.75pt]  [font=\footnotesize,color={rgb, 255:red, 74; green, 144; blue, 226 }  ,opacity=1 ] [align=left] {\textit{handsome}};
% Text Node
\draw (713.5,179) node [anchor=north] [inner sep=0.75pt]  [font=\footnotesize] [align=left] {\underline{Sudarśana}};
% Text Node
\draw (489,197) node [anchor=north west][inner sep=0.75pt]  [font=\footnotesize] [align=left] {(English)};
% Text Node
\draw (640,199) node [anchor=north west][inner sep=0.75pt]  [font=\footnotesize] [align=left] {(Bengali)};
% Text Node
\draw (569,149) node [anchor=north west][inner sep=0.75pt]    {$\ncong $};
% Text Node
\draw (611.5,180) node [anchor=north] [inner sep=0.75pt]  [font=\footnotesize] [align=left] {\underline{Billie}};

\end{tikzpicture}
	\caption{Non-isomorphic text diagrams}
	\label{fig12}
\end{figure}
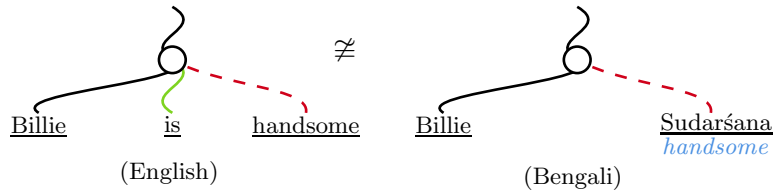

\section{Text Circuits}
The next step after introducing text diagrams will be to move towards text circuits. In the case of English, \cite{main} shows that  text circuits can be represented using wires, boxes, and nested boxes. To begin, we will assume that the text circuits for Bengali have a similar structure to those of English, since our aim is for the text circuits to act as the bridge between the languages. As we progress, we will see, we need to make some slight modifications to the pre-existing formalism to aid our reasoning.
\ent
First, observe, if we take the framework to be the same for Bengali, the issue with the adjective that we discussed in par.\ref{is issue}, pg.\pageref{is issue} gets resolved quite naturally. In the text circuits of the English language \cite{main}, adjectives are treated as processes that modify nouns. As such, the text circuit for English sentences like ``Millie loves Billie. Billie is handsome,'' and their Bengali translation  ``Millie 
$\underset{\color{subblue}{Billie}}{\text{Billiekē}}$ 
$\underset{\color{subblue}{loves}}{\text{bhālōbāsē.}}$
$\underset{\color{subblue}{Billie}}{\text{Billie}}$ 
$\underset{\color{subblue}{handsome}}{\text{sudarśana''}}$ are identical:
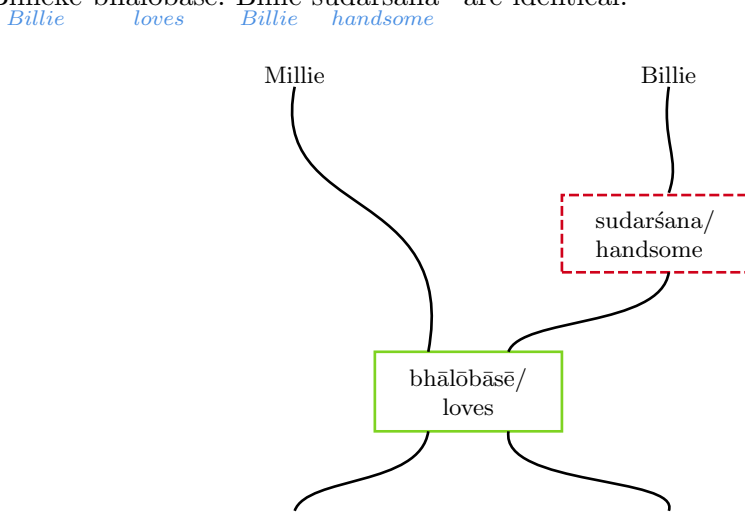
\begin{figure}[H]
	\centering
	\tikzset{every picture/.style={line width=1pt}} %set default line width to 0.75pt        

\begin{tikzpicture}[x=1pt,y=1pt,yscale=-1,xscale=1]
%uncomment if require: \path (0,1067); %set diagram left start at 0, and has height of 1067

%Shape: Rectangle [id:dp5730507379617074] 
\draw  [color={rgb, 255:red, 126; green, 211; blue, 33 }  ,draw opacity=1 ] (320,770) -- (390,770) -- (390,800) -- (320,800) -- cycle ;
%Curve Lines [id:da7106304797159676] 
\draw    (290,670) .. controls (279.91,720.45) and (351.64,706.36) .. (340,770) ;
%Shape: Rectangle [id:dp5516816669304618] 
\draw  [color={rgb, 255:red, 208; green, 2; blue, 27 }  ,draw opacity=1 ][dash pattern={on 3.75pt off 1.5pt}] (390,710.86) -- (460,710.86) -- (460,740) -- (390,740) -- cycle ;
%Curve Lines [id:da05331113771855145] 
\draw    (430,710) .. controls (434.66,696.81) and (426.9,690.96) .. (430,670) ;
%Curve Lines [id:da7812102087190518] 
\draw    (370,770) .. controls (374.66,756.81) and (426.9,760.96) .. (430,740) ;
%Curve Lines [id:da09915269871911625] 
\draw    (430,830) .. controls (434.66,816.81) and (366.9,820.96) .. (370,800) ;
%Curve Lines [id:da16852895912904287] 
\draw    (290,830) .. controls (294.66,816.81) and (336.9,820.96) .. (340,800) ;

% Text Node
\draw (355,785) node  [font=\footnotesize] [align=left] {bhālōbāsē/\\ \ \ \ \ loves};
% Text Node
\draw (430,670) node [anchor=south] [inner sep=0.75pt]  [font=\footnotesize] [align=left] {Billie};
% Text Node
\draw (290,670) node [anchor=south] [inner sep=0.75pt]  [font=\footnotesize] [align=left] {Millie};
% Text Node
\draw (425,725.43) node  [font=\footnotesize] [align=left] {sudarśana/\\handsome};

\end{tikzpicture}
    \caption{Example of identical text circuits despite having different production rules (adjective) }
	\label{tcircuit1}
\end{figure}
In par.\ref{par:prati}, pg \pageref{par:prati}, we discuss  the silent adposition `prati' in Bengali, which corresponds to most  adpositions in English. Although not mentioning `prati' is much more natural in most cases, we will explicitly include it in the text circuits when its existence is clear from the context. For example:
\begin{figure}[H]
	\centering
	\tikzset{every picture/.style={line width=1pt}} %set default line width to 0.75pt        

\begin{tikzpicture}[x=1pt,y=1pt,yscale=-1,xscale=1]
%uncomment if require: \path (0,1067); %set diagram left start at 0, and has height of 1067

%Shape: Rectangle [id:dp36300819370907345] 
\draw  [color={rgb, 255:red, 126; green, 211; blue, 33 }  ,draw opacity=1 ] (160,400) -- (210,400) -- (210,430) -- (160,430) -- cycle ;
%Shape: Rectangle [id:dp9824576667320905] 
\draw  [color={rgb, 255:red, 48; green, 77; blue, 225 }  ,draw opacity=1 ] (150,390) -- (260,390) -- (260,440) -- (150,440) -- cycle ;
%Shape: Rectangle [id:dp8665270688299221] 
\draw  [color={rgb, 255:red, 126; green, 211; blue, 33 }  ,draw opacity=1 ] (160,330) -- (220,330) -- (220,360) -- (160,360) -- cycle ;
%Curve Lines [id:da6688330009466079] 
\draw    (110,300) .. controls (110,328.97) and (169.33,303.05) .. (170,330) ;
%Curve Lines [id:da6896330574025592] 
\draw    (180,300) .. controls (181,328) and (211.35,307.77) .. (210,330) ;
%Curve Lines [id:da6328929768853591] 
\draw    (200,390) .. controls (190,373) and (282,380) .. (290,300) ;
%Curve Lines [id:da25676069030076687] 
\draw    (170,360) .. controls (176,377) and (163,377) .. (170,390) ;
%Curve Lines [id:da37114308988834355] 
\draw    (210,360) .. controls (212,387) and (241,365) .. (240,390) ;
%Curve Lines [id:da4496961792878147] 
\draw    (170,440) .. controls (170,465) and (115.39,454.5) .. (110,470) ;
%Curve Lines [id:da07014500561564208] 
\draw    (200,440) .. controls (200,456.84) and (290,451.81) .. (290,470) ;
%Curve Lines [id:da3626956229328996] 
\draw    (240,440) .. controls (240,456.84) and (180,451.81) .. (180,470) ;

% Text Node
\draw (180,300) node [anchor=south] [inner sep=0.75pt]  [font=\footnotesize] [align=left] {Billie};
% Text Node
\draw (290,300) node [anchor=south] [inner sep=0.75pt]  [font=\footnotesize] [align=left] {chocolate};
% Text Node
\draw (110,300) node [anchor=south] [inner sep=0.75pt]  [font=\footnotesize] [align=left] {Millie};
% Text Node
\draw (185.5,351.5) node  [font=\footnotesize,color={rgb, 255:red, 74; green, 144; blue, 226 }  ,opacity=1 ] [align=left] {\textit{\textcolor[rgb]{0.29,0.56,0.89}{~~~loves}}};
% Text Node
\draw (189.5,339.5) node  [font=\footnotesize] [align=left] {bhālōbāsē};
% Text Node
\draw (239.5,420.5) node  [font=\footnotesize] [align=left] {\textit{\textcolor[rgb]{0.29,0.56,0.89}{to}}};
% Text Node
\draw (236.5,408.5) node  [font=\footnotesize] [align=left] {prati};
% Text Node
\draw (181,421.5) node  [font=\footnotesize,color={rgb, 255:red, 74; green, 144; blue, 226 }  ,opacity=1 ] [align=left] {\textit{~~~gives}};
% Text Node
\draw (185,409.5) node  [font=\footnotesize] [align=left] {dēẏa};

\end{tikzpicture}
    \caption{Associated text circuit of Fig.\ref{ad}}
	\label{tcircuit1_2}
\end{figure}
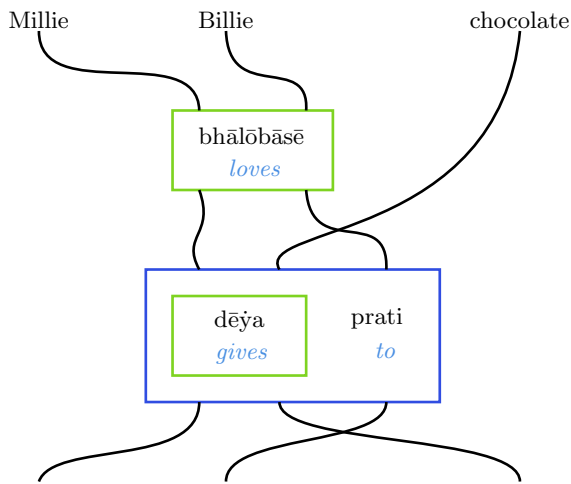
The corresponding text circuit of the direct English translation of the compound sentence mentioned in Fig.\ref{ad} will be:
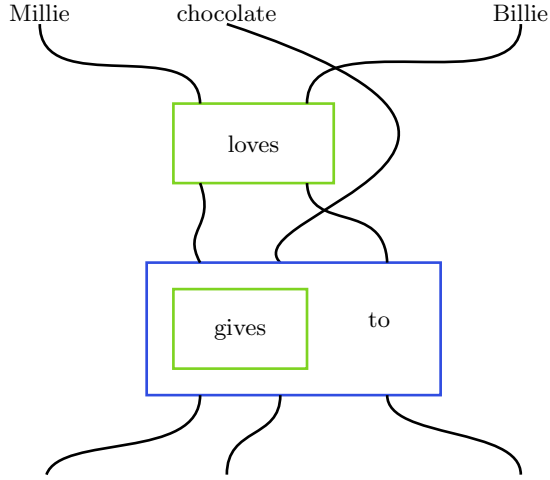
\begin{figure}[H]
	\centering
	\tikzset{every picture/.style={line width=1pt}} %set default line width to 0.75pt        

\begin{tikzpicture}[x=1pt,y=1pt,yscale=-1,xscale=1]
%uncomment if require: \path (0,1067); %set diagram left start at 0, and has height of 1067

%Curve Lines [id:da19880962508231703] 
\draw    (520,400) .. controls (502,382) and (642,357) .. (500,310) ;
%Shape: Rectangle [id:dp8943137471204669] 
\draw  [color={rgb, 255:red, 126; green, 211; blue, 33 }  ,draw opacity=1 ] (480,410) -- (530,410) -- (530,440) -- (480,440) -- cycle ;
%Shape: Rectangle [id:dp5216915590116558] 
\draw  [color={rgb, 255:red, 48; green, 77; blue, 225 }  ,draw opacity=1 ] (470,400) -- (580,400) -- (580,450) -- (470,450) -- cycle ;
%Shape: Rectangle [id:dp6948365944496272] 
\draw  [color={rgb, 255:red, 126; green, 211; blue, 33 }  ,draw opacity=1 ] (480,340) -- (540,340) -- (540,370) -- (480,370) -- cycle ;
%Curve Lines [id:da4848202804285564] 
\draw    (430,310) .. controls (430,338.97) and (489.33,313.05) .. (490,340) ;
%Curve Lines [id:da8597101056863221] 
\draw    (530,340) .. controls (530,308) and (610.5,340) .. (610,310) ;
%Curve Lines [id:da3144240004420624] 
\draw    (530,370) .. controls (530.67,388.87) and (559,376) .. (560,400) ;
%Curve Lines [id:da21213785053971113] 
\draw    (490,450) .. controls (490,475) and (437.89,464.5) .. (432.5,480) ;
%Curve Lines [id:da7583418656472498] 
\draw    (560,450) .. controls (560,466.84) and (610,461.81) .. (610,480) ;
%Curve Lines [id:da15889845637491706] 
\draw    (520,450) .. controls (520,466.84) and (500,461.81) .. (500,480) ;
%Curve Lines [id:da7096902372113304] 
\draw    (490,370) .. controls (496,387) and (483,387) .. (490,400) ;

% Text Node
\draw (610,310) node [anchor=south] [inner sep=0.75pt]  [font=\footnotesize] [align=left] {Billie};
% Text Node
\draw (500,310) node [anchor=south] [inner sep=0.75pt]  [font=\footnotesize] [align=left] {chocolate};
% Text Node
\draw (430,310) node [anchor=south] [inner sep=0.75pt]  [font=\footnotesize] [align=left] {Millie};
% Text Node
\draw (510,355) node  [font=\footnotesize] [align=left] {loves};
% Text Node
\draw (505,425) node  [font=\footnotesize] [align=left] {gives};
% Text Node
\draw (557,421.5) node  [font=\footnotesize] [align=left] {to};

\end{tikzpicture}
    \caption{Text circuit of the direct English translation of the  example in Fig.\ref{ad}}
	\label{tcircuit1_3}
\end{figure}
At first glance, the circuit diagrams of the original and translated texts may appear different. However, since two circuits with identical connectivity but differing only in the ordering of their noun labels are regarded as equivalent \cite{main}, the two diagrams are, in effect, equal up to a translation of the terminal symbols of the text.
\ent
From the above two cases, we see that text circuits can make the underlying structures of the two languages appear more aligned. Therefore, building on Lemma~\ref{bi 1} and noting that connectivity is only significant in text circuits, and that Bengali text circuits adhere to all the principles of text circuits outlined in~\cite{main}, we can state the following lemma:
\ent
\begin{lemma}\label{bi 2}
	Let $T_\mathcal{E}$ be a text in English and $T_\mathcal{B}$ in Bengali, and let $S_\mathcal{E}$ and $S_\mathcal{B}$ be the sets of all terminal symbols of $T_\mathcal{E}$ and $T_\mathcal{B}$, respectively. If there is a bijection (in the sense of translation) between $S_\mathcal{E} $ and $S_\mathcal{B}$,  then the text circuits $C_\mathcal{E}$ of $T_\mathcal{E}$ and $C_\mathcal{B}$ of $T_\mathcal{B}$ are equal up to a translation of the terminal symbols of the text.
\end{lemma}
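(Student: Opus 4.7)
The plan is to bootstrap directly from Lemma~\ref{bi 1}. Given the bijection $S_\mathcal{E}\approxeq S_\mathcal{B}$, that lemma already supplies a graph isomorphism $\phi\colon\tau_\mathcal{E}\to\tau_\mathcal{B}$ between the two text diagrams, matching terminal symbols to their translations and preserving the colored edge structure. My goal reduces to showing that the canonical text-diagram-to-text-circuit procedure from \cite{main} commutes (up to relabelling of noun wires) with $\phi$. So the proof will have two conceptual steps: first, a diagrammatic translation step ensuring the circuit construction is functorial on isomorphisms of text diagrams; second, a normalization step ensuring that those few grammatical features where English and Bengali genuinely diverge at the diagram level (the copula \emph{is}/\emph{haẏa} and the silent adposition \emph{prati}) collapse to the same circuit after the construction.

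First I would carry out an induction on the construction of the text. For simple sentences, Fig.~\ref{table3} gives the generating text diagrams for Bengali; the corresponding English generators from \cite{main} differ only in the planar ordering of \texttt{NP} wires. Passing to a text circuit forgets the planar embedding and retains only the connectivity of wires to boxes, so each generator produces the same circuit box in either language once the terminal symbols are identified via the bijection. For compound sentences built by pronominal linking (Type~I), the composition rules from Fig.~\ref{fig11} permit wires to cross freely, so any graph isomorphism of text diagrams descends to an identification of the composed circuits. For Type~II sentences (sentential-complement verbs and conjunctions), the phase-bubble construction of Table~\ref{table5} is applied recursively to the subtexts, and the inductive hypothesis on the sub-diagrams gives the required equality on the enclosing bubble.

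Next I would handle the two structural exceptions flagged in the body of the paper. The copula issue at par.\ref{is issue} is resolved by the observation already illustrated in Fig.~\ref{tcircuit1}: since adjectives act as processes modifying noun wires in the text circuit, the English \emph{is} and the missing Bengali copula both disappear into the same adjective-box, so the two circuits agree. For the silent \emph{prati} discussed at par.\ref{par:prati}, I would invoke the convention stated just before Fig.~\ref{tcircuit1_2}, namely that \emph{prati} is explicitly included whenever its presence is clear from context; this is exactly what is needed for the accusative-marker fragment \emph{-kē} in Bengali to correspond wire-for-wire to an English preposition like \emph{to}, as verified by comparing Fig.~\ref{tcircuit1_2} with Fig.~\ref{tcircuit1_3}. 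Finally, I would appeal to the convention from \cite{main} that two text circuits with identical connectivity differing only in the ordering of noun labels are regarded as equal, which absorbs the remaining bookkeeping about which \texttt{NP} wire sits left versus right of a verb box.

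The main obstacle I anticipate is the second step: justifying that the genuine divergences at the diagram level (copula absence, silent adpositions, gendered adjectival morphology noted at par.\ref{gender issue}) really do all vanish once we pass to circuits. The argument for \emph{is} and \emph{prati} is clean because \cite{main} already defines the circuit semantics to treat adjectives as processes and allows explicit insertion of implicit adpositions, but the gendered-inflection case (e.g.\ the suffix \emph{-ī} in \emph{sundarī}) is subtler: it is part of the adjective's morphology rather than a separate node, so the bijection $S_\mathcal{E}\approxeq S_\mathcal{B}$ must be interpreted as mapping \emph{pretty} to the correctly inflected Bengali form dictated by the gender of the noun it modifies. This means the hypothesis of the lemma has to be read as a context-sensitive bijection; once that interpretation is fixed, the circuit-level equivalence goes through, but it is worth making this reading explicit in the proof to avoid ambiguity.
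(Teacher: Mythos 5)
Your proposal is correct and follows essentially the same route as the paper, which justifies the lemma by combining Lemma~\ref{bi 1} with the observations that only connectivity matters in text circuits, that the copula and the silent adposition \emph{prati} collapse at the circuit level (Figs.~\ref{tcircuit1}--\ref{tcircuit1_3}), and that circuits differing only in noun-label ordering are regarded as equal. Your inductive formalization and the caveat about reading the bijection context-sensitively for gendered inflections are refinements of, not departures from, the paper's argument.
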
 
In fact, it can be assumed that the text circuits remain equal even under stricter conditions, that is, if there exists a subjective mapping  from $S_\mathcal{E} \setminus \{\underline{\text{is}}\}$ to $S_\mathcal{B} \cup \{\underline{\text{prati}}\}$.

\section{A Detour into Conjunction of English Language}
In \cite{main}, the symbol [\&] is introduced as a generic marker of conjunction to capture the idea of connectedness, which in text circuits  are depicted through  either parallel or sequential composition. In \cite{and}, [\&] is represented as \raisebox{-0.9ex}{
    \begin{tikzpicture}[x=0.75pt,y=0.75pt,yscale=-1,xscale=1]
       %Curve Lines [id:da5195034744771343] 
        \draw    (320,450) .. controls (315,463.2) and (305,463.2) .. (300,450) ;
        %Straight Lines [id:da797697902785662] 
        \draw [color={rgb, 255:red, 0; green, 0; blue, 0 }  ,draw opacity=1 ]   (310,470) -- (310,460) ;
        %Shape: Circle [id:dp6590300651693437] 
        \draw  [fill={rgb, 255:red, 255; green, 255; blue, 255 }  ,fill opacity=1 ] (307.5,460) .. controls (307.5,458.62) and (308.62,457.5) .. (310,457.5) .. controls (311.38,457.5) and (312.5,458.62) .. (312.5,460) .. controls (312.5,461.38) and (311.38,462.5) .. (310,462.5) .. controls (308.62,462.5) and (307.5,461.38) .. (307.5,460) -- cycle ;
         
    \end{tikzpicture}
}, which is used as a connector of two noun phrases. Moreover, \raisebox{-0.9ex}{
      \begin{tikzpicture}[x=.75pt,y=.75pt,yscale=-1,xscale=1]
%uncomment if require: \path (0,929); %set diagram left start at 0, and has height of 929

%Curve Lines [id:da9297349746313446] 
\draw    (280,450) .. controls (275,463.2) and (265,463.2) .. (260,450) ;
%Straight Lines [id:da22601084003999328] 
\draw [color={rgb, 255:red, 0; green, 0; blue, 0 }  ,draw opacity=1 ]   (270,470) -- (270,460) ;
%Shape: Circle [id:dp6883613380731788] 
\draw  [fill={rgb, 255:red, 255; green, 255; blue, 255 }  ,fill opacity=1 ] (267.5,460) .. controls (267.5,458.62) and (268.62,457.5) .. (270,457.5) .. controls (271.38,457.5) and (272.5,458.62) .. (272.5,460) .. controls (272.5,461.38) and (271.38,462.5) .. (270,462.5) .. controls (268.62,462.5) and (267.5,461.38) .. (267.5,460) -- cycle ;
%Straight Lines [id:da2986710469297664] 
\draw  [dash pattern={on 0.84pt off 2.51pt}]  (280,450) -- (260,450) ;
\end{tikzpicture}} is a generalization of \raisebox{-0.9ex}{
    \begin{tikzpicture}[x=0.75pt,y=0.75pt,yscale=-1,xscale=1]
       %Curve Lines [id:da5195034744771343] 
        \draw    (320,450) .. controls (315,463.2) and (305,463.2) .. (300,450) ;
        %Straight Lines [id:da797697902785662] 
        \draw [color={rgb, 255:red, 0; green, 0; blue, 0 }  ,draw opacity=1 ]   (310,470) -- (310,460) ;
        %Shape: Circle [id:dp6590300651693437] 
        \draw  [fill={rgb, 255:red, 255; green, 255; blue, 255 }  ,fill opacity=1 ] (307.5,460) .. controls (307.5,458.62) and (308.62,457.5) .. (310,457.5) .. controls (311.38,457.5) and (312.5,458.62) .. (312.5,460) .. controls (312.5,461.38) and (311.38,462.5) .. (310,462.5) .. controls (308.62,462.5) and (307.5,461.38) .. (307.5,460) -- cycle ;
         
    \end{tikzpicture}
}, which  can connect more than two such units. One can see that  [\&]  acts as the classical AND operation in the text circuit. In addition to [\&], our discussion also requires something similar to the logical NOT operation that will indicate the negation of a word's meaning. Diagrammatically, we will represent it as \raisebox{-0.9ex}{
    \begin{tikzpicture}[x=0.75pt,y=0.75pt,yscale=-1,xscale=1]
        %uncomment if require: \path (0,589); %set diagram left start at 0, and has height of 589

        %Straight Lines [id:da9325445298718736] 
        \draw [color={rgb, 255:red, 0; green, 0; blue, 0 }  ,draw opacity=1 ]   (340,470) -- (340,450) ;
        %Shape: Circle [id:dp947409317080711] 
        \draw  [fill={rgb, 255:red, 255; green, 255; blue, 255 }  ,fill opacity=1 ] (335,460) .. controls (335,457.24) and (337.24,455) .. (340,455) .. controls (342.76,455) and (345,457.24) .. (345,460) .. controls (345,462.76) and (342.76,465) .. (340,465) .. controls (337.24,465) and (335,462.76) .. (335,460) -- cycle ;
        
        % Text Node
        \draw (340,460) node  [font=\tiny] [align=left] {\begin{minipage}[lt]{6.36pt}\setlength\topsep{0pt}
        \begin{center}
        $\displaystyle \pi $
        \end{center}
        
        \end{minipage}};
    \end{tikzpicture}
}.
\begin{lemma}\label{lemma:text_circuit}
In the text circuit of a simple sentence, all conjunctions (except causative ones) between two noun phrases can be written as a composition (parallel or sequential) of \raisebox{-0.9ex}{
    \begin{tikzpicture}[x=0.75pt,y=0.75pt,yscale=-1,xscale=1]
       %Curve Lines [id:da5195034744771343] 
        \draw    (320,450) .. controls (315,463.2) and (305,463.2) .. (300,450) ;
        %Straight Lines [id:da797697902785662] 
        \draw [color={rgb, 255:red, 0; green, 0; blue, 0 }  ,draw opacity=1 ]   (310,470) -- (310,460) ;
        %Shape: Circle [id:dp6590300651693437] 
        \draw  [fill={rgb, 255:red, 255; green, 255; blue, 255 }  ,fill opacity=1 ] (307.5,460) .. controls (307.5,458.62) and (308.62,457.5) .. (310,457.5) .. controls (311.38,457.5) and (312.5,458.62) .. (312.5,460) .. controls (312.5,461.38) and (311.38,462.5) .. (310,462.5) .. controls (308.62,462.5) and (307.5,461.38) .. (307.5,460) -- cycle ;
         
    \end{tikzpicture}
} and \raisebox{-0.9ex}{
    \begin{tikzpicture}[x=0.75pt,y=0.75pt,yscale=-1,xscale=1]
        %uncomment if require: \path (0,589); %set diagram left start at 0, and has height of 589

        %Straight Lines [id:da9325445298718736] 
        \draw [color={rgb, 255:red, 0; green, 0; blue, 0 }  ,draw opacity=1 ]   (340,470) -- (340,450) ;
        %Shape: Circle [id:dp947409317080711] 
        \draw  [fill={rgb, 255:red, 255; green, 255; blue, 255 }  ,fill opacity=1 ] (335,460) .. controls (335,457.24) and (337.24,455) .. (340,455) .. controls (342.76,455) and (345,457.24) .. (345,460) .. controls (345,462.76) and (342.76,465) .. (340,465) .. controls (337.24,465) and (335,462.76) .. (335,460) -- cycle ;
        
        % Text Node
        \draw (340,460) node  [font=\tiny] [align=left] {\begin{minipage}[lt]{6.36pt}\setlength\topsep{0pt}
        \begin{center}
        $\displaystyle \pi $
        \end{center}
        
        \end{minipage}};
    \end{tikzpicture}
}. 
\end{lemma}
\begin{proof}
In English, conjunctions, excluding causative ones, primarily have some fixed roles. There are conjunctions  which link  elements together: additive conjunctions;  which depicts options: alternative choice conjunctions;  which  restricts elements or ideas: negation or exclusion conjunctions; and conjunctions representing contrast or opposition of ideas.
\ent
We begin by observing the behavior of these conjunctions. To do so, we consider a simple imperative sentence, ``\underline{\hspace{1cm}} Millie \underline{\hspace{1cm}} Billie go.'' In  \underline{\hspace{1cm}}, we will systematically replace different groups of conjunctions, one by one, so that we can examine how each functions within the sentence.
\ent 
To begin, we focus on conjunctions that signify additive or connectedness. These include `and', `both $\dots$ and', `not only $\dots$ but also', etc. Using our imperative sentence, we can get a sentence like ``Millie and Billie go.''  We can see that these group of conjunctions together behave like the logical AND
operation. In the text circuit, this  is equivalent to:
\begin{figure}[H]
	\centering
	\tikzset{every picture/.style={line width=1pt}} %set default line width to 0.75pt        

\begin{tikzpicture}[x=1pt,y=1pt,yscale=-1,xscale=1]
%uncomment if require: \path (0,935); %set diagram left start at 0, and has height of 935

%Shape: Rectangle [id:dp47280400562008207] 
\draw  [color={rgb, 255:red, 144; green, 19; blue, 254 }  ,draw opacity=1 ] (209.55,162.83) -- (286.36,162.83) -- (286.36,219.15) -- (209.55,219.15) -- cycle ;
%Shape: Rectangle [id:dp503369952437492] 
\draw  [color={rgb, 255:red, 126; green, 211; blue, 33 }  ,draw opacity=1 ] (214.67,193.55) -- (281.24,193.55) -- (281.24,214.03) -- (214.67,214.03) -- cycle ;
%Straight Lines [id:da7225290746089662] 
\draw    (219.79,214.03) -- (219.79,222.23) -- (219.79,234.52) ;
%Straight Lines [id:da18841057945151363] 
\draw    (276.12,214.03) -- (276.12,225.3) -- (276.12,234.52) ;
%Curve Lines [id:da3216290596141296] 
\draw    (219.79,147.46) .. controls (213.61,169.11) and (239.46,176.77) .. (247.96,175.63) ;
%Curve Lines [id:da3370238817423443] 
\draw    (276.12,147.46) .. controls (282.53,171.98) and (256.69,176.77) .. (247.96,175.63) ;
%Straight Lines [id:da13163452720258273] 
\draw    (247.96,175.63) -- (247.96,194.06) ;
%Shape: Ellipse [id:dp29219748006657575] 
\draw  [fill={rgb, 255:red, 255; green, 255; blue, 255 }  ,fill opacity=1 ] (245.4,175.63) .. controls (245.4,177.04) and (246.54,178.19) .. (247.96,178.19) .. controls (249.37,178.19) and (250.52,177.04) .. (250.52,175.63) .. controls (250.52,174.21) and (249.37,173.07) .. (247.96,173.07) .. controls (246.54,173.07) and (245.4,174.21) .. (245.4,175.63) -- cycle ;

% Text Node
\draw (247.96,203.79) node   [align=left] {{\footnotesize go}};
% Text Node
\draw (219.79,147.46) node [anchor=south] [inner sep=0.75pt]   [align=left] {{\footnotesize Millie}};
% Text Node
\draw (276.12,147.46) node [anchor=south] [inner sep=0.75pt]   [align=left] {{\footnotesize Billie}};

\end{tikzpicture}
	\caption{Text circuit for sentence ``Millie and Billie go.'' }
	\label{and}
\end{figure}
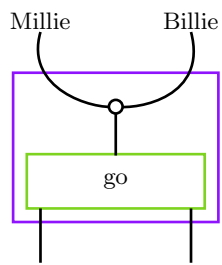
Alternative or choice conjunctions, like `or' and `either\dots or' show options between words or clauses. The options can be both inclusive and exclusive.  For example, in the sentence ``You can have milk or coffee,'' `or' is being used as inclusive or, whereas in ``You can take the car or the train,'' it functions as an indicator of exclusive option. If  used as `inclusive or,' `or' corresponds to the logical OR operation. Then for the sentence ``Millie or Billie goes,'' the text circuit will become:
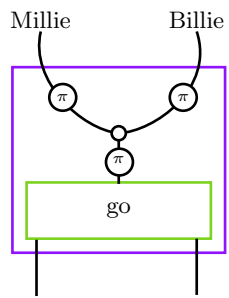
\begin{figure}[H]
	\centering
	\tikzset{every picture/.style={line width=1pt}} %set default line width to 0.75pt        

\begin{tikzpicture}[x=1pt,y=1pt,yscale=-1,xscale=1]
%uncomment if require: \path (0,935); %set diagram left start at 0, and has height of 935

%Shape: Rectangle [id:dp46824532084511483] 
\draw  [color={rgb, 255:red, 144; green, 19; blue, 254 }  ,draw opacity=1 ] (637.21,117.94) -- (716.78,117.94) -- (716.78,187.96) -- (637.21,187.96) -- cycle ;
%Shape: Rectangle [id:dp31529350948681834] 
\draw  [color={rgb, 255:red, 126; green, 211; blue, 33 }  ,draw opacity=1 ] (642.51,161.43) -- (711.47,161.43) -- (711.47,182.65) -- (642.51,182.65) -- cycle ;
%Straight Lines [id:da3507651680307826] 
\draw    (646.22,183.07) -- (646.22,191.56) -- (646.22,204.29) ;
%Straight Lines [id:da5615394742883671] 
\draw    (706.17,182.65) -- (706.17,194.32) -- (706.17,203.87) ;
%Curve Lines [id:da7927738563046413] 
\draw    (647.82,104.48) .. controls (643.49,123.21) and (663.2,141.47) .. (676.99,142.87) ;
%Curve Lines [id:da698098417146912] 
\draw    (706.17,104.48) .. controls (713.13,126.07) and (690.25,142.17) .. (676.99,142.87) ;
%Straight Lines [id:da9647229382490022] 
\draw    (676.99,142.87) -- (676.99,161.96) ;
%Shape: Ellipse [id:dp15992896852502403] 
\draw  [fill={rgb, 255:red, 255; green, 255; blue, 255 }  ,fill opacity=1 ] (674.34,142.87) .. controls (674.34,144.33) and (675.53,145.52) .. (676.99,145.52) .. controls (678.46,145.52) and (679.64,144.33) .. (679.64,142.87) .. controls (679.64,141.4) and (678.46,140.22) .. (676.99,140.22) .. controls (675.53,140.22) and (674.34,141.4) .. (674.34,142.87) -- cycle ;
%Shape: Ellipse [id:dp24149416566785253] 
\draw  [fill={rgb, 255:red, 255; green, 255; blue, 255 }  ,fill opacity=1 ] (696.09,129.34) .. controls (696.09,126.56) and (698.35,124.3) .. (701.13,124.3) .. controls (703.91,124.3) and (706.17,126.56) .. (706.17,129.34) .. controls (706.17,132.12) and (703.91,134.38) .. (701.13,134.38) .. controls (698.35,134.38) and (696.09,132.12) .. (696.09,129.34) -- cycle ;
%Shape: Ellipse [id:dp7244643775445698] 
\draw  [fill={rgb, 255:red, 255; green, 255; blue, 255 }  ,fill opacity=1 ] (651,129.34) .. controls (651,126.56) and (653.25,124.3) .. (656.04,124.3) .. controls (658.82,124.3) and (661.08,126.56) .. (661.08,129.34) .. controls (661.08,132.12) and (658.82,134.38) .. (656.04,134.38) .. controls (653.25,134.38) and (651,132.12) .. (651,129.34) -- cycle ;
%Shape: Ellipse [id:dp3997645651305958] 
\draw  [fill={rgb, 255:red, 255; green, 255; blue, 255 }  ,fill opacity=1 ] (672.22,153.74) .. controls (672.22,150.96) and (674.47,148.7) .. (677.26,148.7) .. controls (680.04,148.7) and (682.3,150.96) .. (682.3,153.74) .. controls (682.3,156.53) and (680.04,158.78) .. (677.26,158.78) .. controls (674.47,158.78) and (672.22,156.53) .. (672.22,153.74) -- cycle ;

% Text Node
\draw (676.99,152.95) node  [font=\tiny]  {$\pi $};
% Text Node
\draw (676.99,172.04) node   [align=left] {{\footnotesize go}};
% Text Node
\draw (647.82,104.48) node [anchor=south] [inner sep=0.75pt]   [align=left] {{\footnotesize Millie}};
% Text Node
\draw (706.17,104.48) node [anchor=south] [inner sep=0.75pt]   [align=left] {{\footnotesize Billie}};
% Text Node
\draw (701.13,129.34) node  [font=\tiny]  {$\pi $};
% Text Node
\draw (656.04,129.34) node  [font=\tiny]  {$\pi $};

\end{tikzpicture}
	\caption{Text circuit for sentence ``Millie or Billie goes.''}
	\label{or}
\end{figure}
The above text diagram can be directly read as ``not (not Millie and not Billie)  go,'' which implies both Millie and Billie can not skip going; that is, at least either of them needs to go, but both can go as well.
\ent
In contrast, in English, conjunctions like `either \dots or' make the choice clearer and show preference for only one option, that is, this conjunction acts like the  XOR operation. Therefore, in  the text circuit format, the sentence ``Either Millie or Billie goes'' can be written as:
\begin{figure}[H]
	\centering
	\tikzset{every picture/.style={line width=1pt}} %set default line width to 0.75pt        

\begin{tikzpicture}[x=1pt,y=1pt,yscale=-1,xscale=1]
%uncomment if require: \path (0,935); %set diagram left start at 0, and has height of 935

%Shape: Rectangle [id:dp5553231062974634] 
\draw  [color={rgb, 255:red, 144; green, 19; blue, 254 }  ,draw opacity=1 ] (730.12,360.84) -- (830.11,360.84) -- (830.11,492.87) -- (730.12,492.87) -- cycle ;
%Shape: Rectangle [id:dp44634580621869513] 
\draw  [color={rgb, 255:red, 126; green, 211; blue, 33 }  ,draw opacity=1 ] (745.59,465.25) -- (817.41,465.25) -- (817.41,487.35) -- (745.59,487.35) -- cycle ;
%Straight Lines [id:da39787573818704425] 
\draw    (751.11,487.35) -- (751.11,496.19) -- (751.11,509.45) ;
%Straight Lines [id:da5426568526073926] 
\draw    (781.5,487.9) -- (781.5,500.06) -- (781.5,510) ;
%Curve Lines [id:da4152749765904551] 
\draw    (761.42,426.36) .. controls (761.42,442.85) and (771.8,451.33) .. (781.11,452.27) ;
%Curve Lines [id:da9066898409712694] 
\draw    (800.81,426.36) .. controls (801.16,445.21) and (790.06,451.8) .. (781.11,452.27) ;
%Straight Lines [id:da5952948453902533] 
\draw    (781.11,452.27) -- (781.11,465.16) ;
%Shape: Ellipse [id:dp9652418528635452] 
\draw  [fill={rgb, 255:red, 255; green, 255; blue, 255 }  ,fill opacity=1 ] (779.32,452.27) .. controls (779.32,453.26) and (780.12,454.06) .. (781.11,454.06) .. controls (782.1,454.06) and (782.9,453.26) .. (782.9,452.27) .. controls (782.9,451.28) and (782.1,450.48) .. (781.11,450.48) .. controls (780.12,450.48) and (779.32,451.28) .. (779.32,452.27) -- cycle ;
%Curve Lines [id:da6669367902115808] 
\draw    (741.17,344.82) .. controls (737.3,376.31) and (752.1,411.57) .. (761.42,413.32) ;
%Curve Lines [id:da2314723887344189] 
\draw    (780.94,345.37) .. controls (783.71,381.28) and (770.37,412.45) .. (761.42,413.32) ;
%Straight Lines [id:da3509324529066312] 
\draw    (761.42,413.32) -- (761.42,427.07) ;
%Curve Lines [id:da24441652173898742] 
\draw    (788.27,376.36) .. controls (788.27,392.85) and (787.2,411.45) .. (800.81,413.6) ;
%Curve Lines [id:da0009435959979181563] 
\draw    (812.97,389.61) .. controls (813.29,408.45) and (808.89,413.13) .. (800.81,413.6) ;
%Straight Lines [id:da06708350009966424] 
\draw    (800.81,413.6) -- (800.81,426.49) ;
%Shape: Ellipse [id:dp3930254529931063] 
\draw  [fill={rgb, 255:red, 255; green, 255; blue, 255 }  ,fill opacity=1 ] (799.01,413.6) .. controls (799.01,414.59) and (799.82,415.39) .. (800.81,415.39) .. controls (801.79,415.39) and (802.6,414.59) .. (802.6,413.6) .. controls (802.6,412.61) and (801.79,411.81) .. (800.81,411.81) .. controls (799.82,411.81) and (799.01,412.61) .. (799.01,413.6) -- cycle ;
%Shape: Ellipse [id:dp04200772112044049] 
\draw  [fill={rgb, 255:red, 255; green, 255; blue, 255 }  ,fill opacity=1 ] (759.62,413.32) .. controls (759.62,414.31) and (760.43,415.11) .. (761.42,415.11) .. controls (762.4,415.11) and (763.21,414.31) .. (763.21,413.32) .. controls (763.21,412.33) and (762.4,411.53) .. (761.42,411.53) .. controls (760.43,411.53) and (759.62,412.33) .. (759.62,413.32) -- cycle ;
%Shape: Ellipse [id:dp45385056429576554] 
\draw  [fill={rgb, 255:red, 255; green, 255; blue, 255 }  ,fill opacity=1 ] (806.91,402) .. controls (806.91,399.1) and (809.26,396.75) .. (812.16,396.75) .. controls (815.06,396.75) and (817.41,399.1) .. (817.41,402) .. controls (817.41,404.89) and (815.06,407.24) .. (812.16,407.24) .. controls (809.26,407.24) and (806.91,404.89) .. (806.91,402) -- cycle ;
%Shape: Ellipse [id:dp20754649313253704] 
\draw  [fill={rgb, 255:red, 255; green, 255; blue, 255 }  ,fill opacity=1 ] (757.74,435.14) .. controls (757.74,432.24) and (760.09,429.89) .. (762.99,429.89) .. controls (765.89,429.89) and (768.24,432.24) .. (768.24,435.14) .. controls (768.24,438.04) and (765.89,440.39) .. (762.99,440.39) .. controls (760.09,440.39) and (757.74,438.04) .. (757.74,435.14) -- cycle ;
%Shape: Ellipse [id:dp7832854832777404] 
\draw  [fill={rgb, 255:red, 255; green, 255; blue, 255 }  ,fill opacity=1 ] (795.86,434.59) .. controls (795.86,431.69) and (798.21,429.34) .. (801.11,429.34) .. controls (804.01,429.34) and (806.36,431.69) .. (806.36,434.59) .. controls (806.36,437.49) and (804.01,439.84) .. (801.11,439.84) .. controls (798.21,439.84) and (795.86,437.49) .. (795.86,434.59) -- cycle ;
%Straight Lines [id:da06861989027243676] 
\draw    (742.44,376) -- (788.27,376.36) ;
%Straight Lines [id:da12075006565274982] 
\draw    (775.97,390.12) -- (812.97,389.61) ;
%Shape: Ellipse [id:dp33646745771919084] 
\draw  [fill={rgb, 255:red, 255; green, 255; blue, 255 }  ,fill opacity=1 ] (784.81,401.44) .. controls (784.81,398.54) and (787.16,396.19) .. (790.06,396.19) .. controls (792.96,396.19) and (795.31,398.54) .. (795.31,401.44) .. controls (795.31,404.34) and (792.96,406.69) .. (790.06,406.69) .. controls (787.16,406.69) and (784.81,404.34) .. (784.81,401.44) -- cycle ;

% Text Node
\draw (790.06,401.44) node  [font=\tiny]  {$\pi $};
% Text Node
\draw (781.5,476.3) node   [align=left] {{\footnotesize go}};
% Text Node
\draw (812.16,402) node  [font=\tiny]  {$\pi $};
% Text Node
\draw (762.99,435.14) node  [font=\tiny]  {$\pi $};
% Text Node
\draw (801.11,434.59) node  [font=\tiny]  {$\pi $};
% Text Node
\draw (741.17,344.82) node [anchor=south] [inner sep=0.75pt]   [align=left] {{\footnotesize Millie}};
% Text Node
\draw (780.94,345.37) node [anchor=south] [inner sep=0.75pt]   [align=left] {{\footnotesize Billie}};

\end{tikzpicture}
	\caption{Text circuit for sentence ``Either Millie or Billie goes.''}
	\label{xor}
\end{figure}
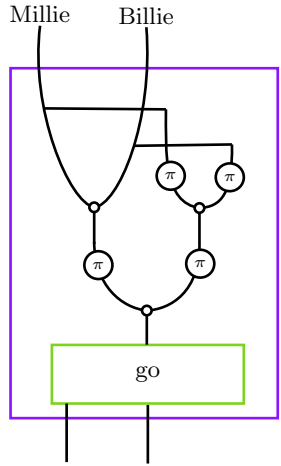
The diagram reads ``Millie and Billie together cannot go and both of them cannot skip going,'' which perfectly captures the meaning of `either\dots or'.
\ent
Conjunctions like `nor', `neither \dots nor', etc., implies negation or total exclusion in a sentence. This equivalents to the Boolean  NOR operation. For example, in text circuit ,``Neither Millie nor Billie goes'' becomes:
\begin{figure}[H]
	\centering
	\tikzset{every picture/.style={line width=1pt}} %set default line width to 0.75pt        

\begin{tikzpicture}[x=1pt,y=1pt,yscale=-1,xscale=1]
%uncomment if require: \path (0,935); %set diagram left start at 0, and has height of 935

%Shape: Rectangle [id:dp8297422780676026] 
\draw  [color={rgb, 255:red, 144; green, 19; blue, 254 }  ,draw opacity=1 ] (810,108.54) -- (885.06,108.54) -- (885.06,174.59) -- (810,174.59) -- cycle ;
%Shape: Rectangle [id:dp08838306095837267] 
\draw  [color={rgb, 255:red, 126; green, 211; blue, 33 }  ,draw opacity=1 ] (815,149.57) -- (880.06,149.57) -- (880.06,169.59) -- (815,169.59) -- cycle ;
%Straight Lines [id:da6899107034917432] 
\draw    (818.5,169.98) -- (818.5,177.99) -- (818.5,190) ;
%Straight Lines [id:da18067914642795868] 
\draw    (875.05,169.59) -- (875.05,180.6) -- (875.05,189.61) ;
%Curve Lines [id:da2353519958511031] 
\draw    (820,95.84) .. controls (814.24,115.66) and (834.52,130.74) .. (847.53,132.06) ;
%Curve Lines [id:da8022649136287959] 
\draw    (875.05,95.84) .. controls (875.55,122.18) and (860.04,131.4) .. (847.53,132.06) ;
%Straight Lines [id:da9950325561716246] 
\draw    (847.53,132.06) -- (847.53,150.07) ;
%Shape: Ellipse [id:dp39241771012560533] 
\draw  [fill={rgb, 255:red, 255; green, 255; blue, 255 }  ,fill opacity=1 ] (845.03,132.06) .. controls (845.03,133.44) and (846.15,134.56) .. (847.53,134.56) .. controls (848.91,134.56) and (850.03,133.44) .. (850.03,132.06) .. controls (850.03,130.67) and (848.91,129.55) .. (847.53,129.55) .. controls (846.15,129.55) and (845.03,130.67) .. (845.03,132.06) -- cycle ;
%Shape: Ellipse [id:dp12569066643850646] 
\draw  [fill={rgb, 255:red, 255; green, 255; blue, 255 }  ,fill opacity=1 ] (865.54,119.3) .. controls (865.54,116.67) and (867.67,114.54) .. (870.3,114.54) .. controls (872.92,114.54) and (875.05,116.67) .. (875.05,119.3) .. controls (875.05,121.92) and (872.92,124.05) .. (870.3,124.05) .. controls (867.67,124.05) and (865.54,121.92) .. (865.54,119.3) -- cycle ;
%Shape: Ellipse [id:dp4225257823468075] 
\draw  [fill={rgb, 255:red, 255; green, 255; blue, 255 }  ,fill opacity=1 ] (823.01,119.3) .. controls (823.01,116.67) and (825.14,114.54) .. (827.76,114.54) .. controls (830.39,114.54) and (832.52,116.67) .. (832.52,119.3) .. controls (832.52,121.92) and (830.39,124.05) .. (827.76,124.05) .. controls (825.14,124.05) and (823.01,121.92) .. (823.01,119.3) -- cycle ;

% Text Node
\draw (847.53,159.58) node   [align=left] {{\footnotesize go}};
% Text Node
\draw (820,95.84) node [anchor=south] [inner sep=0.75pt]   [align=left] {{\footnotesize Millie}};
% Text Node
\draw (875.05,95.84) node [anchor=south] [inner sep=0.75pt]   [align=left] {{\footnotesize Billie}};
% Text Node
\draw (870.3,119.3) node  [font=\tiny]  {$\pi $};
% Text Node
\draw (827.76,119.3) node  [font=\tiny]  {$\pi $};

\end{tikzpicture}
	\caption{Text circuit for sentence ``Neither Millie nor Billie goes.''}
	\label{nor}
\end{figure}
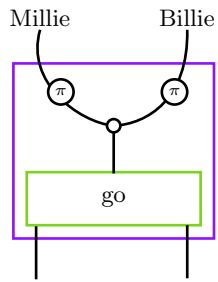
Lastly, contrast or opposition in English is often expressed through conjunctions like `but', `yet', and `however'. Logically, it can be seen as a partial negation. Therefore, text circuit for sentence like ``Millie goes but not Billie'' can be drawn as:
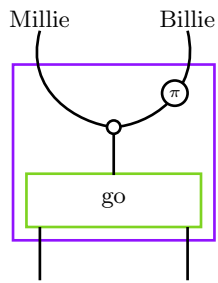
\begin{figure}[H]
	\centering
	\tikzset{every picture/.style={line width=1pt}} %set default line width to 0.75pt        

\begin{tikzpicture}[x=1pt,y=1pt,yscale=-1,xscale=1]
%uncomment if require: \path (0,935); %set diagram left start at 0, and has height of 935

%Shape: Rectangle [id:dp4462897101909862] 
\draw  [color={rgb, 255:red, 144; green, 19; blue, 254 }  ,draw opacity=1 ] (460.8,133.8) -- (536.17,133.8) -- (536.17,200.12) -- (460.8,200.12) -- cycle ;
%Shape: Rectangle [id:dp07993868788128133] 
\draw  [color={rgb, 255:red, 126; green, 211; blue, 33 }  ,draw opacity=1 ] (465.82,175) -- (531.14,175) -- (531.14,195.1) -- (465.82,195.1) -- cycle ;
%Straight Lines [id:da4719059611528661] 
\draw    (470.85,195.1) -- (470.85,203.14) -- (470.85,215.2) ;
%Straight Lines [id:da3300506140913958] 
\draw    (526.12,195.1) -- (526.12,206.15) -- (526.12,215.2) ;
%Curve Lines [id:da8138892636270325] 
\draw    (470.85,121.05) .. controls (464.62,144.16) and (485.42,156.09) .. (498.48,157.41) ;
%Curve Lines [id:da5957034959674102] 
\draw    (526.12,121.05) .. controls (531.64,145.12) and (511.04,156.75) .. (498.48,157.41) ;
%Straight Lines [id:da7357376760381014] 
\draw    (498.48,157.41) -- (498.48,175.5) ;
%Shape: Ellipse [id:dp7184723109032] 
\draw  [fill={rgb, 255:red, 255; green, 255; blue, 255 }  ,fill opacity=1 ] (495.97,157.41) .. controls (495.97,158.8) and (497.09,159.93) .. (498.48,159.93) .. controls (499.87,159.93) and (500.99,158.8) .. (500.99,157.41) .. controls (500.99,156.03) and (499.87,154.9) .. (498.48,154.9) .. controls (497.09,154.9) and (495.97,156.03) .. (495.97,157.41) -- cycle ;
%Shape: Ellipse [id:dp8025561895105006] 
\draw  [fill={rgb, 255:red, 255; green, 255; blue, 255 }  ,fill opacity=1 ] (516.57,144.6) .. controls (516.57,141.96) and (518.71,139.83) .. (521.34,139.83) .. controls (523.98,139.83) and (526.12,141.96) .. (526.12,144.6) .. controls (526.12,147.24) and (523.98,149.37) .. (521.34,149.37) .. controls (518.71,149.37) and (516.57,147.24) .. (516.57,144.6) -- cycle ;

% Text Node
\draw (498.48,185.05) node   [align=left] {{\footnotesize go}};
% Text Node
\draw (470.85,121.05) node [anchor=south] [inner sep=0.75pt]   [align=left] {{\footnotesize Millie}};
% Text Node
\draw (526.12,121.05) node [anchor=south] [inner sep=0.75pt]   [align=left] {{\footnotesize Billie}};
% Text Node
\draw (521.34,144.6) node  [font=\tiny]  {$\pi $};

\end{tikzpicture}
	\caption{Text circuit for sentence ``Millie goes but not Billie.''}
	\label{nor_2}
\end{figure}
Therefore, we see, excluding causal conjunctions, the action of all other conjunctions on two noun phrases can be written as some Boolean operations: Additive  conjunctions by logical AND operation, $A \wedge B$; alternative or choice conjunctions by logical OR operation, $A \vee B$, for inclusive or, and logical XOR operation, $A \oplus B$, for exclusive or; negation or exclusion conjunctions by logical NOR operation, $\neg A \wedge \neg B$; conjunctions showing contrast or opposition via partial negation or $A \wedge \neg B$ operation. Since they all correspond to some Boolean operations, they can be written by some serial and parallel combinations of the AND (\raisebox{-0.9ex}{
    \begin{tikzpicture}[x=0.75pt,y=0.75pt,yscale=-1,xscale=1]
       %Curve Lines [id:da5195034744771343] 
        \draw    (320,450) .. controls (315,463.2) and (305,463.2) .. (300,450) ;
        %Straight Lines [id:da797697902785662] 
        \draw [color={rgb, 255:red, 0; green, 0; blue, 0 }  ,draw opacity=1 ]   (310,470) -- (310,460) ;
        %Shape: Circle [id:dp6590300651693437] 
        \draw  [fill={rgb, 255:red, 255; green, 255; blue, 255 }  ,fill opacity=1 ] (307.5,460) .. controls (307.5,458.62) and (308.62,457.5) .. (310,457.5) .. controls (311.38,457.5) and (312.5,458.62) .. (312.5,460) .. controls (312.5,461.38) and (311.38,462.5) .. (310,462.5) .. controls (308.62,462.5) and (307.5,461.38) .. (307.5,460) -- cycle ;
         
    \end{tikzpicture}
}) and the NOT (\raisebox{-0.9ex}{
    \begin{tikzpicture}[x=0.75pt,y=0.75pt,yscale=-1,xscale=1]
        %uncomment if require: \path (0,589); %set diagram left start at 0, and has height of 589

        %Straight Lines [id:da9325445298718736] 
        \draw [color={rgb, 255:red, 0; green, 0; blue, 0 }  ,draw opacity=1 ]   (340,470) -- (340,450) ;
        %Shape: Circle [id:dp947409317080711] 
        \draw  [fill={rgb, 255:red, 255; green, 255; blue, 255 }  ,fill opacity=1 ] (335,460) .. controls (335,457.24) and (337.24,455) .. (340,455) .. controls (342.76,455) and (345,457.24) .. (345,460) .. controls (345,462.76) and (342.76,465) .. (340,465) .. controls (337.24,465) and (335,462.76) .. (335,460) -- cycle ;
        
        % Text Node
        \draw (340,460) node  [font=\tiny] [align=left] {\begin{minipage}[lt]{6.36pt}\setlength\topsep{0pt}
        \begin{center}
        $\displaystyle \pi $
        \end{center}
        
        \end{minipage}};
    \end{tikzpicture}
}) gates.
\end{proof}
\ent
Apart from the  earlier discussed conjunctions, some conjunctions connect clauses by showing cause, result, condition, time, or purpose. Conjunctions like `because', `since', `as' depict cause ($B \to A$);  `so', `therefore', `thus' show result ($A \to B$); `if', `unless' denote conditions ($A \to B$, $\neg A \to B$); `when', `while', `before', `after' indicate time ($A \prec B$, $B \prec A$); `so that', `in order that' express purpose ($A \to B_{\text{goal}}$).  We call all these conjunctions together causal conjunctions. Our lemma does not cover these.
\section{DisCoCirc at Translation}
From Lemma~\ref{bi 1} and Lemma~\ref{bi 2}, we can infer that the DisCoCirc formalism might serve as a useful tool for machine translation between languages$-$in our case, between Bengali and English. In this section, we explore this possibility in a greater depth.
\ent
Primarily, we think that to some extent it is possible to use the text circuit formalism for the task of translation, especially if there is a bijection between the set of terminal symbols of the two languages $S_\mathcal{E}$ and $S_\mathcal{B}$. But to do so, we first need to include an update rule in the text circuits, similar to the ones discussed in \cite{update}. This is because, although there are similarities in the structure of both the languages, they do differ to an extent when looked carefully.
\ent 
Firstly, Bengali and English pronoun system exhibits semantic asymmetry and can not be mapped isomorphically. For example, since Bengali does not have the concept of gendered pronouns both the pronouns `he' and `she' of English get translated to the gender-neutral pronoun `sē' of Bengali. Moreover, unlike English, in Bengali, there are pronouns with honorific distinction, `tui'  (intimate), `tumi' (neutral), `apni' (formal / respectful), and when  translated they all get mapped to `you'.  Besides pronouns, Bengali also has gendered adjectives (par.\ref{gender issue}, pg.\pageref{gender issue}) and differential object markers (par.\ref{ke issue}, pg.\pageref{ke issue}), neither of which is found in English. In addition to this, the subject–verb agreement works differently for the two languages. In English, the form of a verb changes depending on whether the subject is singular or plural, and also on person: for example, ``He/She eats,'' and ``They eat.'' Whereas, in Bengali, a verb must agree with the honorific level of the subject, and similarly on person: for instance, $\underset{\color{subblue}{He/She}}{\text{``Sē}}$ $\underset{\color{subblue}{eats}}{\text{khay,''}}$ and
$\underset{\color{subblue}{They}}{\text{``Tārā}}$
$\underset{\color{subblue}{eat}}{\text{khay''}}$ have the same form of `eat', `khay'; but for the sentence
$\underset{\color{subblue}{He/She (hon.)}}{\text{``Tini}}$
$\underset{\color{subblue}{Billie}}{\text{khan,''}}$ `eat' is written as `khan'.
\ent
To tackle the above mentioned cases, the update rule  discussed in \cite{update} seems helpful. The update rule updates the attributes associated to the noun phrase depending on the context. In the text circuit, this can be done by adding an additional node to the wire of the corresponding noun phrase of our need, like \raisebox{-0.9ex}{
\begin{tikzpicture}[x=0.75pt,y=0.75pt,yscale=-1,xscale=1]
%uncomment if require: \path (0,929); %set diagram left start at 0, and has height of 929

%Curve Lines [id:da8766165635782149] 
\draw    (440.5,451.5) .. controls (442.21,457.28) and (417.2,454.29) .. (409.52,454.29) ;
%Straight Lines [id:da24667221418384988] 
\draw [color={rgb, 255:red, 0; green, 0; blue, 0 }  ,draw opacity=1 ]   (410.5,466.5) -- (410.5,436.5) ;
%Shape: Circle [id:dp9354045615670801] 
\draw  [fill={rgb, 255:red, 255; green, 255; blue, 255 }  ,fill opacity=1 ] (408.48,453.52) .. controls (408.48,452.41) and (409.38,451.5) .. (410.5,451.5) .. controls (411.62,451.5) and (412.52,452.41) .. (412.52,453.52) .. controls (412.52,454.64) and (411.62,455.55) .. (410.5,455.55) .. controls (409.38,455.55) and (408.48,454.64) .. (408.48,453.52) -- cycle ;
%Shape: Triangle [id:dp36636581874298635] 
\draw   (441.34,440) -- (465.5,451.5) -- (415.58,451.5) -- cycle ;

% Text Node
\draw (440.5,452.5) node [anchor=south] [inner sep=0.75pt]  [font=\tiny] [align=left] {update};

\end{tikzpicture}

}. As the text progresses, more and more traits get added to the noun as the context becomes clearer. For example, from the English sentences ``Millie loves Billie,'' we have no knowledge of whether `Billie' is a person or an object: therefore, we do not know whether to add `-kē' or not during Bengali translation. But if the text is ``Millie loves Billie. He is handsome,'' we definitely know that `Billie' is a person. This way, as the text progresses, more information is introduced, which adds context to the noun phrases and aids in a more accurate translation.  Thus, the appropriate text diagram with update rule  for this particular text will be:
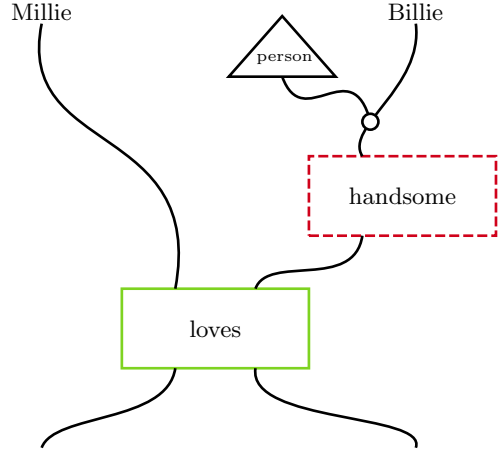
\begin{figure}[H]
	\centering
	\tikzset{every picture/.style={line width=1pt}} %set default line width to 0.75pt        

\begin{tikzpicture}[x=1pt,y=1pt,yscale=-1,xscale=1]
%uncomment if require: \path (0,1067); %set diagram left start at 0, and has height of 1067

%Shape: Rectangle [id:dp5730507379617074] 
\draw  [color={rgb, 255:red, 126; green, 211; blue, 33 }  ,draw opacity=1 ] (320,770) -- (390,770) -- (390,800) -- (320,800) -- cycle ;
%Curve Lines [id:da7106304797159676] 
\draw    (290,670) .. controls (279.91,720.45) and (351.64,706.36) .. (340,770) ;
%Shape: Rectangle [id:dp5516816669304618] 
\draw  [color={rgb, 255:red, 208; green, 2; blue, 27 }  ,draw opacity=1 ][dash pattern={on 3.75pt off 1.5pt}] (390,720) -- (460,720) -- (460,750) -- (390,750) -- cycle ;
%Curve Lines [id:da05331113771855145] 
\draw    (410,720) .. controls (403,709) and (433,694) .. (430,670) ;
%Curve Lines [id:da7812102087190518] 
\draw    (370,770) .. controls (374.66,756.81) and (406.9,770.96) .. (410,750) ;
%Curve Lines [id:da09915269871911625] 
\draw    (430,830) .. controls (434.66,816.81) and (366.9,820.96) .. (370,800) ;
%Curve Lines [id:da16852895912904287] 
\draw    (290,830) .. controls (294.66,816.81) and (336.9,820.96) .. (340,800) ;
%Curve Lines [id:da1370761762104884] 
\draw    (380,690) .. controls (388,712) and (406,680) .. (413,707) ;
%Shape: Circle [id:dp9891624908684965] 
\draw  [fill={rgb, 255:red, 255; green, 255; blue, 255 }  ,fill opacity=1 ] (416,707) .. controls (416,705.34) and (414.66,704) .. (413,704) .. controls (411.34,704) and (410,705.34) .. (410,707) .. controls (410,708.66) and (411.34,710) .. (413,710) .. controls (414.66,710) and (416,708.66) .. (416,707) -- cycle ;
%Shape: Triangle [id:dp9057748219723781] 
\draw   (380,667.14) -- (400,690) -- (360,690) -- cycle ;

% Text Node
\draw (355,785) node  [font=\footnotesize] [align=left] {loves};
% Text Node
\draw (430,670) node [anchor=south] [inner sep=0.75pt]  [font=\footnotesize] [align=left] {Billie};
% Text Node
\draw (290,670) node [anchor=south] [inner sep=0.75pt]  [font=\footnotesize] [align=left] {Millie};
% Text Node
\draw (425,735) node  [font=\footnotesize] [align=left] {handsome};
% Text Node
\draw (380.86,683.43) node  [font=\tiny] [align=left] {person};

\end{tikzpicture}
	\caption{Example of updated text circuit (English) }
	\label{and1}
\end{figure}
We will also require to use similar update rule in Bengali. From ``Millie 
$\underset{\color{subblue}{Billie}}{\text{Billiekē}}$ 
$\underset{\color{subblue}{loves}}{\text{bhālōbāsē.}}$
$\underset{\color{subblue}{He/She}}{\text{Sē}}$ 
$\underset{\color{subblue}{handsome}}{\text{sudarśana,''}}$ it can be deduced that `Billie' is male, therefore, `sē' in this case should be translated to `he', not `she'. So, in this case, the updated text circuit for Bengali is:
\begin{figure}[H]
	\centering
	\tikzset{every picture/.style={line width=1pt}} %set default line width to 0.75pt        

\begin{tikzpicture}[x=1pt,y=1pt,yscale=-1,xscale=1]
%uncomment if require: \path (0,1067); %set diagram left start at 0, and has height of 1067

%Shape: Rectangle [id:dp5730507379617074] 
\draw  [color={rgb, 255:red, 126; green, 211; blue, 33 }  ,draw opacity=1 ] (320,770) -- (390,770) -- (390,800) -- (320,800) -- cycle ;
%Curve Lines [id:da7106304797159676] 
\draw    (290,670) .. controls (279.91,720.45) and (351.64,706.36) .. (340,770) ;
%Shape: Rectangle [id:dp5516816669304618] 
\draw  [color={rgb, 255:red, 208; green, 2; blue, 27 }  ,draw opacity=1 ][dash pattern={on 3.75pt off 1.5pt}] (390,720) -- (460,720) -- (460,750) -- (390,750) -- cycle ;
%Curve Lines [id:da05331113771855145] 
\draw    (410,720) .. controls (403,709) and (433,694) .. (430,670) ;
%Curve Lines [id:da7812102087190518] 
\draw    (370,770) .. controls (374.66,756.81) and (406.9,770.96) .. (410,750) ;
%Curve Lines [id:da09915269871911625] 
\draw    (430,830) .. controls (434.66,816.81) and (366.9,820.96) .. (370,800) ;
%Curve Lines [id:da16852895912904287] 
\draw    (290,830) .. controls (294.66,816.81) and (336.9,820.96) .. (340,800) ;
%Curve Lines [id:da1370761762104884] 
\draw    (380,690) .. controls (388,712) and (406,680) .. (413,707) ;
%Shape: Circle [id:dp9891624908684965] 
\draw  [fill={rgb, 255:red, 255; green, 255; blue, 255 }  ,fill opacity=1 ] (416,707) .. controls (416,705.34) and (414.66,704) .. (413,704) .. controls (411.34,704) and (410,705.34) .. (410,707) .. controls (410,708.66) and (411.34,710) .. (413,710) .. controls (414.66,710) and (416,708.66) .. (416,707) -- cycle ;
%Shape: Triangle [id:dp9057748219723781] 
\draw   (380,667.14) -- (400,690) -- (360,690) -- cycle ;

% Text Node
\draw (430,670) node [anchor=south] [inner sep=0.75pt]  [font=\footnotesize] [align=left] {Billie};
% Text Node
\draw (290,670) node [anchor=south] [inner sep=0.75pt]  [font=\footnotesize] [align=left] {Millie};
% Text Node
\draw (355,785) node  [font=\footnotesize] [align=left] {bhālōbāsē};
% Text Node
\draw (425,735) node  [font=\footnotesize] [align=left] {sudarśana};
% Text Node
\draw (380,685.58) node  [font=\tiny] [align=left] {\textit{\textcolor[rgb]{0.29,0.56,0.89}{male}}};
% Text Node
\draw (380.5,680.5) node  [font=\tiny] [align=left] {puruṣa};

\end{tikzpicture}
	\caption{Example of updated text circuit (Bengali) }
	\label{and2}
\end{figure}
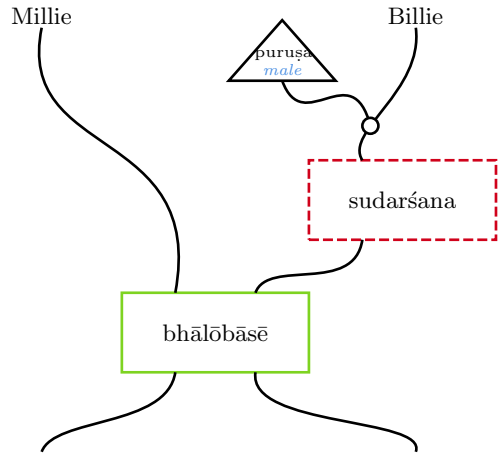
Now, given the text circuit in Fig. \ref{and1}, a possible translation method  might start by converting all the terminal symbols (words) of the text circuit to their corresponding Bengali words. Next, the diagram is to be traced backward along the production rules of Bengali to generate the translated text. However, to obtain a sufficiently accurate translation, there needs to be a bijection (in terms of translation) between the sets of terminal symbols, $S_\mathcal{E}$ and $S_\mathcal{B}$, of the two languages. Otherwise, one of the most obvious issues we will encounter is because of the copula `is' in English (par.\ref{is issue}, pg.\pageref{is issue}). As such, the Bengali text circuit of  Fig. \ref{and2} will not support a direct translation. Even so, following the production rules of English, the sentence can be translated to ``Millie loves handsome Billie,'' which roughly conveys a similar meaning.
\ent
Let us look at another example: ``It drizzles in October.'' Although `October' is  translatable into Bengali, the phrase `In October' is rendered as `Akṭōbarē,'  `October' plus the accusative case marker `-ē.' Unlike the case where the accusative case marker corresponding to `to', `towards' could be translated to the silent apposition `prati' (par.\ref{par:prati}, pg.\pageref{par:prati}), in this case, there is no silent adposition that could do the same keeping meaning intact. Moreover, the first part `It drizzles' is not at all directly translatable. However, in Bengali there is a sentence with similar meaning, ``$\underset{\color{subblue}grain-like}{\text{Gũṛi gũṛi}}$ $\underset{\color{subblue}rain}{\text{brṣṭi}}$ $\underset{\color{subblue}falls}{\text{pore,''}}$ but it is not the exact word to word translation. This is not a unique case: there are numerous words, phrases, and sentences in both languages that cannot be translated in an isomorphic fashion. In such cases, the DisCoCirc formalism proves less effective.
\ent
Another challenge  the formalism might face is when dealing with idioms, as idioms are not directly translatable and behave more like terminal symbols. So, the formalism might have difficulties in identifying idiomatic expressions in sentences of both the languages.
\section{Conclusion}
Throughout this paper, we have developed a a formalism for Bengali similar to the one  in \cite{main}, often times showing a side by side comparison with English in order to point out the structural dissimilarities of the two languages. Our primary goal was to reanalyze how effectively the DisCoCirc formalism can minimize interlingual bureaucracy and if the formalism can aid in machine translation. Through our comparative study of the two languages, we could see that although DisCoCirc works great for a significant portion, it is not adequate enough when confronted with cases where words, phrases or sentences resist direct mapping. Such challenges are especially noticeable when it faces gendered adjectives, adpositions, accusative case markers, pronouns, etc., as these lack direct correspondences between the two languages. Besides, several commonly used  phrases and sentences in both the languages are not translatable at all. Moreover, the formalism faces limitations while dealing with idioms, particularly identifying them. Our present work did not focus much on how translation between Bengali and English might be represented more precisely within the category-theoretic framework. We leave this for future research. Moreover, we plan to investigate how the formalism can be extended to handle causal conjunctions, tense variations, and idioms. By addressing these, we hope to develop DisCoCirc into a more efficient tool capable of dealing with linguistic complexity and translation. 
\section{Acknowledgments}
I would like to thank Rahul Pal for his assistance with typesetting and insights on Bengali language. I would like to acknowledge Bob Coecke and Vincent Wang-Maścianica for their discussions on DisCoCirc. I also thank my parents and Prof. Tibra Ali for their encouragement.

\bibliographystyle{is-abbrv}
\bibliography{ref}
\end{document}